\documentclass[twoside]{article}
\usepackage[accepted]{publicpaper}
\usepackage[utf8]{inputenc}
\usepackage{amsthm, amssymb, amsmath}
\usepackage{mathtools}
\usepackage{graphicx}
\usepackage[ruled]{algorithm2e}
\usepackage[breaklinks, colorlinks=true,linkcolor=black, citecolor=blue]{hyperref}
\usepackage{dsfont}
\usepackage{subcaption}
\usepackage{chngcntr}
\usepackage{apptools}
\usepackage{mathtools}
\usepackage[round]{natbib}
\bibliographystyle{plainnat}

\usepackage[final]{fixme}
\usepackage{tikz}
\usetikzlibrary{positioning}
\fxsetup{inline,nomargin,theme=color}
\AtAppendix{\counterwithin{lemma}{section}}

\newcommand{\uml}{\textsc{Uniform Metric Labeling}}

\newcommand{\bg}{$(\beta, \gamma)$}

\DeclareMathOperator*{\argmin}{arg\,min}

\begin{document}
\newtheorem{lemma}{Lemma}
\newtheorem{corollary}{Corollary}
\newtheorem{theorem}{Theorem}
\newtheorem*{itheorem}{Informal Theorem}
\newtheorem{proposition}{Proposition}
\newtheorem{property}{Property}
\theoremstyle{definition}
\newtheorem{definition}{Definition}

\twocolumn[
\aistatstitle{Block Stability for MAP Inference}
\aistatsauthor{Hunter Lang \And David Sontag \And Aravindan Vijayaraghavan}
\aistatsaddress{MIT \And MIT \And Northwestern University} ]

\begin{abstract}
  To understand the empirical success of approximate MAP inference,
  recent work \citep{LanSonVij18} has shown that some popular
  approximation algorithms perform very well when the input instance
  is \emph{stable}. The simplest stability condition assumes that the
  MAP solution does not change at all when some of the pairwise
  potentials are (adversarially) perturbed. Unfortunately, this strong
  condition does not seem to be satisfied in practice. In this paper,
  we introduce a significantly more relaxed condition that only
  requires blocks (portions) of an input instance to be stable. Under
  this block stability condition, we prove that the pairwise LP
  relaxation is \emph{persistent} on the stable blocks. We complement
  our theoretical results with an empirical evaluation of real-world
  MAP inference instances from computer vision. We design an algorithm
  to find stable blocks, and find that these \emph{real} instances
  have large stable regions. Our work gives a theoretical explanation
  for the widespread empirical phenomenon of persistency for this LP
  relaxation.
\end{abstract}

\section{INTRODUCTION}
\label{sec:intro}
As researchers and practitioners begin to apply machine learning
algorithms to areas of society where human lives are at stake---such
as bail decisions, autonomous vehicles, and healthcare---it becomes
increasingly important to understand the empirical performance of
these algorithms from a theoretical standpoint. Because many machine
learning problems are NP-hard, the approaches deployed in practice are
often heuristics or approximation algorithms. These sometimes come
with performance guarantees, but the algorithms typically do
\emph{much better} in practice than their theoretical guarantees
suggest. Heuristics are often chosen solely on the basis of their past
empirical performance, and our theoretical understanding of the
reasons for such performance is limited. To design better algorithms
and to better understand the strengths of our existing approaches, we
must bridge this gap between theory and practice.

To this end, many researchers have looked \emph{beyond worst-case
  analysis}, developing approaches like smoothed analysis,
average-case analysis, and stability. Broadly, these approaches all
attempt to show that the worst-case behavior of an algorithm does not
occur too often in the real world. Some methods are able to show that
worst-case instances are ``brittle,'' whereas others show that
real-world instances have special structure that makes the problem
significantly easier. In this work, we focus on stability, which takes
the latter approach. Informally, an instance of an optimization
problem is said to be stable if the (optimal) solution does not change
when the instance is perturbed. This captures the intuition that
solutions should ``stand out'' from other feasible points on
real-world problem instances.

We focus on the MAP inference problem in Markov Random Fields. MAP
inference is often used to solve \emph{structured prediction} problems
like stereo vision. The goal of stereo vision is to go from two images---one
taken from slightly to the right of the other, like the images seen by
your eyes---to an \emph{assignment} of depths to pixels, which
indicates how far each pixel is from the camera. Markov Random Fields
give an elegant method for finding the best assignment of states
(depths) to variables (pixels), taking into account the structure of
the output space. Figure \ref{fig:gapfig} illustrates the need for
a better theoretical understanding of MAP inference algorithms. An
exact solution to the MAP problem for a real-world stereo vision
instance appears in Figure \ref{fig:gapfig_opt}. Figure
\ref{fig:gapfig_bad} shows an assignment that, according to the
current theory, might be returned by the best approximation
algorithms. These two assignments agree on less than 1\% of their
labels. Finally, Figure \ref{fig:gapfig_appx} shows an assignment
\emph{actually} returned by an approximation algorithm---this
assignment has over 99\% of labels in common with the exact one. This
surprising behavior is not limited to stereo vision. Many structured
prediction problems have approximate MAP algorithms that perform
extremely well in practice despite the exact MAP problems being
NP-hard \citep{Koo, globalMAP, KomoSurvey, swobodapartial}.

The huge difference between Figures \ref{fig:gapfig_bad} and
\ref{fig:gapfig_appx} indicates that real-world instances must have
some structure that makes the MAP problem easy.  Indeed, these
instances seem to have some stability to multiplicative
perturbations. Figure \ref{fig:stabfig} shows MAP solutions to a
stereo vision instance and a small perturbation of that
instance.\footnote{The instances in Figures \ref{fig:gapfig} and
  \ref{fig:stabfig} have the same input images, but Figure
  \ref{fig:stabfig} uses higher resolution.} These solutions share
many common labels, and many portions are exactly the same.

\begin{figure}[t!]
  \centering
  \begin{subfigure}[t]{.32\linewidth}
    \centering
    \includegraphics[width=\linewidth]{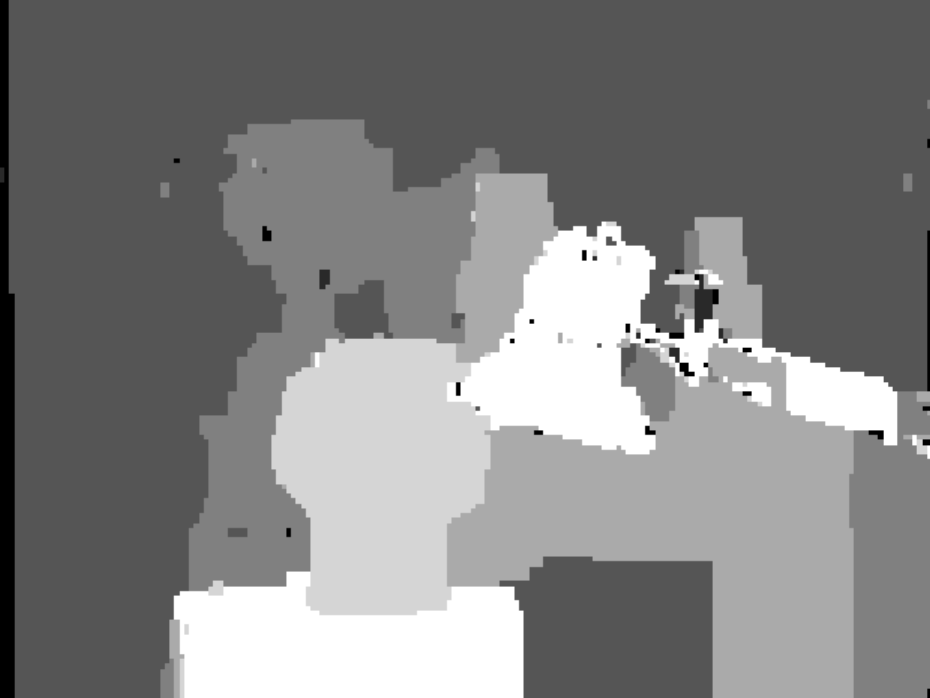}
    \subcaption{Exact soln.}
    \label{fig:gapfig_opt}
  \end{subfigure}%
  \begin{subfigure}[t]{.32\linewidth}
    \centering
    \includegraphics[width=\linewidth]{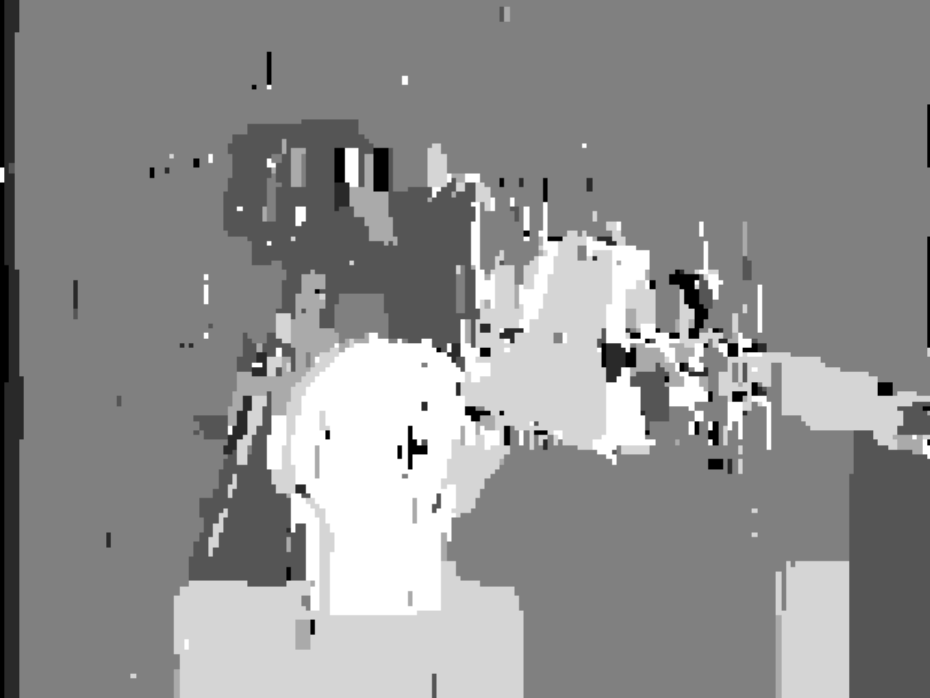}
    \subcaption{Best theory}
    \label{fig:gapfig_bad}    
  \end{subfigure}%
  \begin{subfigure}[t]{.32\linewidth}
    \centering
    \includegraphics[width=\linewidth]{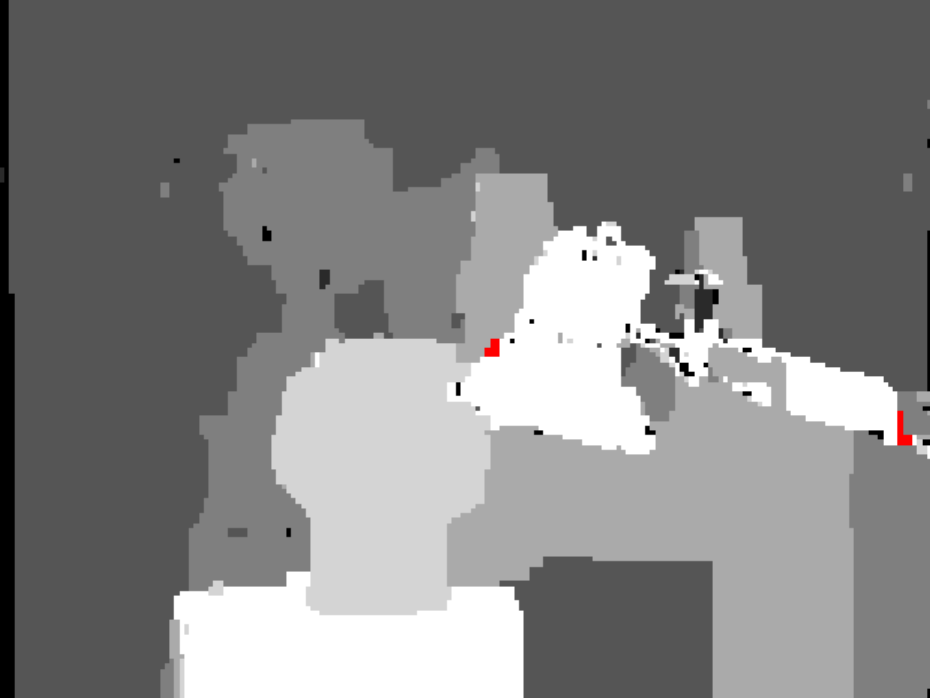}
    \subcaption{Actual appx. soln.}
    \label{fig:gapfig_appx}
  \end{subfigure}
  \caption{Example of an exact solution (left) to a stereo vision
    MAP problem compared to a 2-approximation (the best known theoretical performance bound, center), and a real approximate solution returned by the LP relaxation (right). Fractional portions of the LP solution are shown in red.}
  \label{fig:gapfig}
\end{figure}

\begin{figure}[t!]
  \centering
  \begin{subfigure}[t]{.5\linewidth}
    \centering
    \includegraphics[width=\linewidth]{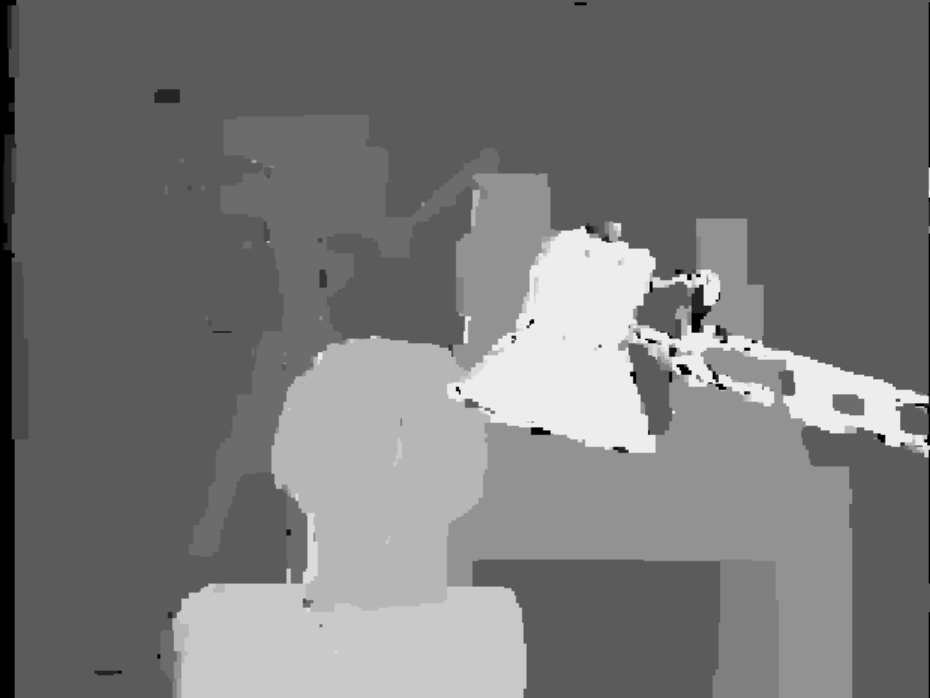}
    \subcaption{Original solution}
  \end{subfigure}%
  \begin{subfigure}[t]{.5\linewidth}
    \centering
    \includegraphics[width=\linewidth]{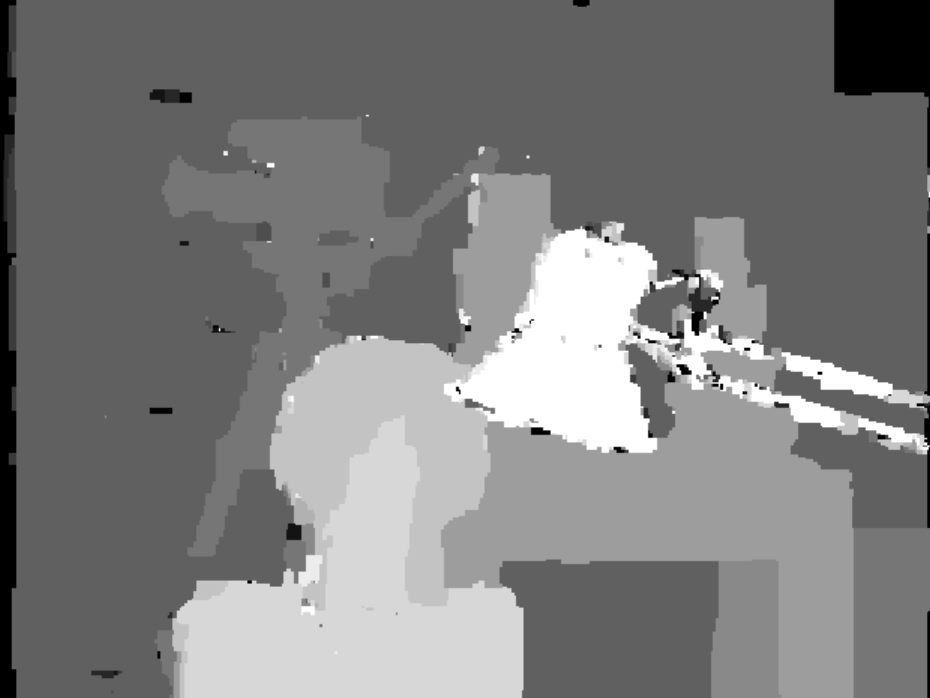}
    \subcaption{Solution to perturbation}
  \end{subfigure}%
  \caption{Solutions to an original (left) and multiplicatively perturbed (right) stereo vision instance. The two solutions agree on over 95\% of the vertices.}
  \label{fig:stabfig}
\end{figure}

Put simply, in the remainder of this work we attempt to use the
structure depicted in Figure \ref{fig:stabfig} to explain why Figure
\ref{fig:gapfig_appx} is so similar to Figure \ref{fig:gapfig_opt}.

The approximation algorithm used to produce Figure
\ref{fig:gapfig_appx} is called the \emph{pairwise LP relaxation}
\citep{wainwrightjordan, firstearthmover}. This algorithm formulates MAP inference as
an integer linear program (ILP) with variables $x$ that are
constrained to be in $\{0,1\}$. It then relaxes that ILP to a linear
program (LP) with constraints $x \in [0,1]$, which can be solved
efficiently. Unfortunately, the LP solution may not be a valid MAP
solution---it may have fractional values $x \in (0,1)$---so it might
need to be \emph{rounded} to a MAP solution. However, in practice, the
LP solution frequently takes values in $\{0,1\}$, and these values
``match'' with the exact MAP solution, so very little rounding is
needed. For example, the LP solution shown in Figure
\ref{fig:gapfig_appx} takes binary values that agree with the exact
solution on more than 99\% of the instance. This property is known as
\emph{persistency}.

Much previous work has gone into understanding the persistency of the
LP relaxation, typically stemming from a desire to give partial
optimality guarantees for LP solutions and to use persistent solutions as a building block for finding exact MAP solutions. These results \emph{use} the fact that
the pairwise LP is often persistent on large portions of these
instances to design fast algorithms for verifying partial
optimality and for exact MAP inference \citep{kovtun2003partial, globalMAP, swobodapartial, haller2018exact, shekhovtsov2018maximum}. Contrastingly, our work aims to understand \emph{why} the
LP is persistent so frequently on real-world instances.

\citet{LanSonVij18} first explored the stability framework of
\citet{MMWC4Stab} in the context of MAP inference. They showed that
under a strong stability condition, the pairwise LP relaxation
provably returns an exact MAP solution. Unfortunately, this condition
(that the solution does not change \emph{at all} under perturbations)
is rarely, if ever, satisfied in practice. On the other hand, Figure
\ref{fig:stabfig} demonstrates that the original and perturbed
solutions do have many labels in common, so there could be some
stability present at the ``sub-instance'' level.

In this work, we give an extended stability framework, generalizing
the work of \citet{LanSonVij18} to the setting where only some parts
of an instance have stable structure. This naturally connects to work
on dual decomposition for MAP inference. We establish a theoretical
connection between dual decomposition and stability, which allows us
to use stability even when it is only present on parts of an instance,
and allows us to combine stability with other reasons for
persistency. In particular, we define a new notion called \emph{block
  stability}, for which we show the following:
\begin{itemize}
  \setlength\itemsep{-0.3em}
\item We prove that approximate solutions returned by the pairwise LP
  relaxation agree with the exact solution on all the stable blocks of
  an instance.
\item We design an algorithm to find stable blocks on real-world
  instances.
\item We run this algorithm on several examples from low-level
  computer vision, including stereo vision, where we find that these
  instances contain large stable blocks.
\item We demonstrate that the block framework can be used to
  incorporate stability with other structural reasons for persistency of the LP
  relaxation.
\end{itemize}
Taken together, these results suggest that block stability is a
plausible explanation for the empirical success of LP-based algorithms
for MAP inference.

\section{BACKGROUND}
\subsection{MAP Inference and Metric Labeling}
A Markov Random Field consists of a graph $G = (V,E)$, a
discrete set of labels $L = \{1, \ldots, k\}$, and \emph{potential
  functions} $\theta$ that capture the cost of assignments $f: V \to
L$. The MAP inference task in a Markov Random Field is to find the
assignment (or \emph{labeling}) $f: V \to L$ with the lowest cost:
\begin{equation}
  \label{eq:mapobj}
  \min_{f: V \to L} \sum_{u \in V}\theta_u(f(u)) + \sum_{uv \in E}\theta_{uv}(f(u), f(v)).
\end{equation}
Here we have decomposed the set of potential functions functions
$\theta$ into $\theta_u$ and $\theta_{uv}$, which correspond to nodes
and edges in the graph $G$, respectively. A Markov Random Field that
can be decomposed in this manner is known as a \emph{pairwise} MRF;
we focus exclusively on pairwise MRFs. In equation \eqref{eq:mapobj},
the \emph{single-node} potential functions $\theta_u(i)$ represent the
cost of assigning label $i$ to node $u$, and the \emph{pairwise}
potentials $\theta_{uv}(i,j)$ represent the cost of simultaneously
assigning label $i$ to node $u$ and label $j$ to node $v$.

The MAP inference problem has been extensively studied for special
cases of the potential functions $\theta$. When the pairwise potential
functions $\theta_{uv}$ take the form
\begin{equation*}
  \theta_{uv}(i,j) = \begin{cases}
    0 & i = j\\
    w(u,v) & \text{otherwise,}
  \end{cases}
\end{equation*}
the model is called a \emph{generalized Potts model}. When the
\emph{weights} $w(u,v)$ are nonnegative, as they are throughout this
paper, the model is called \emph{ferromagnetic} or
\emph{attractive}. This formulation has enjoyed a great deal of use in
the computer vision community, where it has proven especially useful
for modeling low-level problems like stereo vision, segmentation, and
denoising \citep{BoykovExpansion, tappen2003comparison}. With this
special form of $\theta_{uv}$, we can re-write the MAP inference
objective as:
\begin{equation}
  \label{eq:umlobj}
  \min_{f: V \to L} Q(f) := \sum_{u \in V}\theta_u(f(u)) + \smashoperator{\sum_{\substack{uv \in E\\f(u)\ne f(v)}}}w(u,v)
\end{equation}
Here we have defined $Q$ as the objective of a feasible labeling
$f$. We can then call $(G, \theta, w, L)$ an instance of MAP inference
for a Potts model with node costs $\theta$ and weights $w$.

The minimization problem \eqref{eq:umlobj} is also known as \uml, and
was first defined and studied under that name by \citet{umlkt}. Exact
minimization of the objective \eqref{eq:umlobj} is NP-hard
\citep{umlkt}, but many good approximation algorithms exist. Most
notably for our work, \citet{umlkt} give a 2-approximation based on
the pairwise LP relaxation \eqref{eq:earthmoverlp}.\footnote{\citet{umlkt} use
  the so-called ``metric LP,'' but this is equivalent to
  \eqref{eq:earthmoverlp} for Potts potentials \citep{EarthmoverLogn,
    LanSonVij18}, and their rounding algorithm also works for this
  formulation.}
\begin{equation}
  \label{eq:earthmoverlp}
  \def\arraystretch{1.2}
  \begin{array}{cll}
    \displaystyle\underset{x}{\text{min}} & \multicolumn{2}{l}{\displaystyle\sum_{u\in V}\displaystyle\sum_{i \in L}\theta_u(i)x_u(i) + \displaystyle\sum_{uv\in E}\sum_{i,j}\theta_{uv}(i,j)x_{uv}(i,j)}\\
    \text{s.t.}\;
    & \displaystyle\sum_i x_{u}(i) = 1, \;
    & \forall u\in V,\ \forall i \in L\\
    & \sum_{j} x_{uv}(i,j) = x_{u}(i)
    & \forall (u,v) \in E,\; \forall i\in L,\\
    & \sum_{i} x_{uv}(i,j) = x_{v}(j)
    & \forall (u,v) \in E,\; \forall j\in L,\\
    & x_u(i) \ge 0,\;
    &\forall u \in V,\ i \in L.\\
    & x_{uv}(i, j) \ge 0,\;
    &\forall (u,v) \in E,\ i,\ j \in L.
  \end{array}
\end{equation}
Their algorithm rounds a solution $x$ of \eqref{eq:earthmoverlp} to a
labeling $f$ that is guaranteed to satisfy $Q(f) \le 2Q(g)$. The
$|V||L|$ decision variables $x_u(i)$ represent the (potentially
fractional) assignment of label $i$ at vertex $u$. While solutions $x$
to \eqref{eq:earthmoverlp} might, in general, take fractional values
$x_u(i) \in (0,1)$, solutions are often found to be \emph{almost
  entirely} binary-valued in practice \citep{Koo, trainandtest,
  swobodapartial, globalMAP, KomoSurvey}, and these values are
typically the same ones taken by the exact solution to the original
problem. Figure \ref{fig:gapfig_appx} demonstrates this phenomenon. In
other words, it is often the case in practice that if $g(u) = i$, then
$x_u(i) = 1$, where $g$ and $x$ are solutions to \eqref{eq:umlobj} and
\eqref{eq:earthmoverlp}, respectively. This property is called
\emph{persistency} \citep{adams1998persistency}. We say a solution $x$
is persistent at $u$ if $g(u) = i$ and $x_u(i) = 1$ for some $i$.

This LP approach to MAP inference has proven popular in practice
because it is frequently persistent on a large percentage of the
vertices in an instance, and because researchers have developed
several fast algorithms for solving \eqref{eq:earthmoverlp}. These
algorithms typically work by solving the \emph{dual}; Tree-reweighted
Message Passing (TRW-S) \citep{kolmogorov2006convergent}, MPLP
\citep{mplp}, and subgradient descent \citep{SonJaaOpt} are three
well-known dual approaches. Additionally, the introduction of fast
general-purpose LP solvers like Gurobi \citep{gurobi} has made it
possible to directly solve the primal \eqref{eq:earthmoverlp} for
medium-sized instances.

\subsection{Stability}
An instance of an optimization problem is stable if its
solution doesn't change when the input is perturbed. To discuss
stability formally, one must specify the exact type of perturbations
considered. As in \citet{LanSonVij18}, we study multiplicative
perturbations of the weights:
\begin{definition}[$(\beta, \gamma)$-perturbation, \citet{LanSonVij18}]
  \label{def:bgpert}
  Given a weight function $w: E \to \mathbb{R}_{\ge 0}$, a weight
  function $w'$ is called a $(\beta, \gamma)$-perturbation $w'$ of
  $w$ if for any $(u,v) \in E$, $$\frac{1}{\beta}w(u,v) \le w'(u,v)
  \le \gamma w(u,v).$$
\end{definition}
With the perturbations defined, we can formally specify the notion of stability:
\begin{definition}[$(\beta, \gamma)$-stable, \citet{LanSonVij18}]
\label{def:strongstab}
  A MAP inference instance $(G, \theta, w, L)$ with graph $G$, node
  costs $\theta$, weights $w$, labels $L$, and integer solution $g$ is
  called $(\beta, \gamma)$-stable if for any $(\beta,
  \gamma)$-perturbation $w'$ of $w$, and any labeling $h \ne g$,
  $Q'(h) > Q'(g),$ where $Q'$ is the objective with costs $c$ and
  weights $w'$.
\end{definition}

That is, $g$ is the unique solution to the optimization
\eqref{eq:umlobj} where $w$ is replaced by any valid $(\beta,
\gamma)$-perturbation of $w$. As $\beta$ and $\gamma$ increase, the
stability condition becomes increasingly strict. One can show that the
LP relaxation \eqref{eq:earthmoverlp} is tight (returns an exact
solution to \eqref{eq:umlobj}) on suitably stable instances:
\begin{theorem}[Theorem 1, \citet{LanSonVij18}]
\label{thm:strongthm}
  Let $x$ be a solution to the LP relaxation \eqref{eq:earthmoverlp} on a
  (2,1)-stable instance with integer solution $g$. Then $x = g$.
\end{theorem}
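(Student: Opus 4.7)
The plan is to proceed by contradiction: assume $x$ is an LP-optimal solution with $x \neq g$, and extract from this an integer labeling $h \neq g$ together with a valid $(2,1)$-perturbation $w'$ of $w$ under which $Q'(h) \le Q'(g)$, contradicting Definition~\ref{def:strongstab}.

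First I would apply the LP-rounding algorithm of \citet{umlkt} to $x$. This produces a (possibly random) integer labeling $h$ satisfying the 2-approximation guarantee $\mathbb{E}[Q(h)] \le 2\,Q_{LP}(x) \le 2\,Q(g)$, where the second inequality uses that the LP is a relaxation and $g$ is feasible. Because $x \neq g$, the rounded distribution must place positive mass on some labeling different from $g$; fixing one such realization (or derandomizing to the best one) yields a specific $h \neq g$ with manageable cost.

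Next I would construct the perturbation to favor $h$ over $g$. Let $D_h$ and $D_g$ denote the edges cut by $h$ and $g$, respectively. Define $w'(u,v) = \tfrac{1}{2} w(u,v)$ on every edge in $D_h \setminus D_g$ (these are edges that $h$ cuts but $g$ does not) and $w'(u,v) = w(u,v)$ everywhere else. Since $\tfrac{1}{2} w(u,v) \le w'(u,v) \le w(u,v)$, this is a valid $(2,1)$-perturbation. A direct computation with the Potts objective \eqref{eq:umlobj} gives
\begin{equation*}
Q'(h) - Q'(g) \;=\; \bigl(Q(h) - Q(g)\bigr) \;-\; \tfrac{1}{2}\, w(D_h \setminus D_g),
\end{equation*}
since the halved weights appear in $Q'(h)$ but not in $Q'(g)$, and cancel out on $D_h \cap D_g$.

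What remains is to show $Q(h) - Q(g) \le \tfrac{1}{2}\, w(D_h \setminus D_g)$, and this is the main obstacle: the vanilla 2-approximation bound $Q(h) \le 2 Q(g)$ is not nearly tight enough. Overcoming it requires a sharper, \emph{per-edge} analysis of the KT rounding, bounding the expected cost $h$ incurs on an individual edge in terms of that edge's fractional LP contribution $w(u,v)(1 - \sum_i x_{uv}(i,i))$. Combining this per-edge guarantee with LP optimality $Q_{LP}(x) \le Q(g)$ and carefully tracking the contribution of edges inside $D_h \setminus D_g$ versus $D_g \setminus D_h$ should yield $Q'(h) \le Q'(g)$, producing the desired contradiction with $(2,1)$-stability and hence forcing $x = g$.
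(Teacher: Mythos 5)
First, note that this paper does not actually prove Theorem~\ref{thm:strongthm}; it is imported wholesale from \citet{LanSonVij18}, so the comparison here is against that reference's proof. Your high-level plan --- round the LP solution, produce an integer $h \ne g$, and exhibit a $(2,1)$-perturbation under which $h$ is at least as good as $g$ --- is indeed the shape of the known argument (which descends from \citet{MMWC4Stab}). Your algebraic identity $Q'(h) - Q'(g) = (Q(h) - Q(g)) - \tfrac{1}{2}w(D_h \setminus D_g)$ is also correct. But the proof is missing at its core: the inequality $Q(h) - Q(g) \le \tfrac{1}{2}\,w(D_h \setminus D_g)$ that you defer is not a technical loose end --- it is exactly equivalent to the conclusion $Q'(h) \le Q'(g)$ you are trying to reach, so as written you have restated the goal rather than proved it. The entire content of the theorem lives in that step.

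There are two concrete obstructions to closing it the way you suggest. First, the per-edge guarantees of the Kleinberg--Tardos rounding (label $i$ is assigned to $u$ with probability $x_u(i)$; an edge is separated with probability at most twice its LP separation) hold only \emph{in expectation} over the rounding's randomness. Once you ``fix one such realization'' $h \ne g$, you no longer have any per-edge control, and the realized $h$ may well violate $Q(h) - Q(g) \le \tfrac{1}{2}w(D_h\setminus D_g)$ even when the expectation behaves. Second, your perturbation $w'$ is built from $D_h$ and hence is itself random; stability quantifies over fixed perturbations, so the argument must either be run entirely in expectation against a perturbation that does not depend on $h$, or use a deterministic rounding. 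The reference's proof does the former: it uses the \emph{adversarial} perturbation $w^*$ defined from $g$ alone (the one written out at the start of Section~\ref{sec:checking} of this paper), combines the exact preservation of node costs, the factor-$2$ per-edge separation bound, and LP optimality $Q_{LP}(x) \le Q(g)$ to show $\mathbb{E}[Q^*(h)] \le Q^*(g)$; stability then forces $\Pr[h = g] = 1$, and since $\Pr[h(u) = i] = x_u(i)$ this gives $x = g$. You have the right ingredients on the table, but the assembly --- which is where the specific constants $(2,1)$ enter --- is absent.
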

Many researchers have used stability to understand the real-world
performance of approximation algorithms. \citet{BLstable} introduced
perturbation stability for the \textsc{Max Cut}
problem. \citet{MMWC4Stab} improved their result for \textsc{Max Cut}
and gave a general framework for applying stability to graph
partitioning problems. \citet{LanSonVij18} extended their results to
MAP inference in Potts models.  Stability has also been applied to
clustering problems in machine learning \citep{BBG, WhiteStability,
  BalcanLiangStability, ABSStability, additiveStability}.

\section{BLOCK STABILITY}
\label{sec:blocktight}
The current stability definition used in results for the LP relaxation
(Definition \ref{def:strongstab}) requires that the MAP solution does
not change at all for any $(2,1)$-perturbation of the weights
$w$. This strong condition is rarely satisfied by practical instances
such as those in Figure \ref{fig:gapfig} and Figure \ref{fig:stabfig}.
However, it may be the case that the instance is $(2,1)$-stable when
restricted to \emph{large blocks} of the vertices. We show in Section
\ref{sec:experiments} that this is indeed the case in practice, but
for now we precisely define what it means to be \emph{block stable},
where some parts of the instance may be stable, but others may not. We
demonstrate how to connect the ideas of dual decomposition and
stability, working up to our main theoretical result in Theorem
\ref{thm:blocktight}. Appendix \ref{appx:proofs} contains proofs of
the statements in this section.

We begin our discussion with an informal version of our main theorem:
\begin{itheorem}[see Theorem \ref{thm:blocktight}]
  Assume an instance $(G,\theta,w,L)$ has a block $S$ that
  is $(2,1)$-stable \emph{and} has some additional, additive stability
  with respect to the node costs $\theta$ for nodes \emph{along the
    boundary} of $S$. Then the LP \eqref{eq:earthmoverlp} is
  persistent on $S$.
\end{itheorem}

To reason about different blocks of an instance (and eventually prove
persistency of the LP on them), we need a way to \emph{decompose} the
instance into subproblems so that we can examine each one more or less
independently. The \emph{dual decomposition} framework
\citep{SonJaaOpt, komodakis2011mrf} provides a formal method for doing so.  The commonly
studied Lagrangian dual of \eqref{eq:earthmoverlp}, which we call the
\emph{pairwise dual}, turns every node into its own subproblem:
\begin{equation}
  \label{eq:pairwisedual}
  \begin{split}
    \max_{\eta} P(\eta) = \max_{\eta} \sum_{u \in V}\min_i (\theta_u(i) + \sum_{v}\eta_{uv}(i))\\
    + \sum_{uv \in E}\min_{i,j}(\theta_{uv}(i,j) - \eta_{uv}(i) - \eta_{vu}(j))
  \end{split}
\end{equation}
This can be derived by introducing Lagrange multipliers $\eta$ on the
two consistency constraints for each edge $(u,v) \in E$ and each $i \in L$:
\begin{equation*}
  \begin{split}
    \sum_i x_{uv}(i,j) = x_v(j)\; \forall j\\
    \sum_j x_{uv}(i,j) = x_u(i)\; \forall i
  \end{split}
\end{equation*}
Many efficient solvers for \eqref{eq:pairwisedual} have been
developed, such as MPLP \citep{mplp}. But the subproblems in
\eqref{eq:pairwisedual} are too small for our purposes. We want to
find \emph{large} portions of an instance with stable structure.
Given a set $S \subset V$, define $E_S = \{(u,v) \in E : u\in S,\ v
\in S\}$ to be the set of edges with both endpoints in $S$, and let $T
= V \setminus S$. We may consider relaxing fewer consistency
constraints than \eqref{eq:pairwisedual} does, to form a \emph{block
  dual} with blocks $S$ and $T$.
\begin{equation}
  \begin{split}
  \label{eq:blockdual}
  &\max_{\delta} \sum_{W \in \{S,T\}}\min_{x^W}\left(\sum_{u \in W}\sum_{i \in L}(\theta_u(i) + \smashoperator{\sum_{v:(u,v)\in E_{\partial}}}\delta_{uv}(i))x_u^W(i)\right.\\
  &\left.+\sum_{uv \in E_{W}}\sum_{i,j}\theta_{uv}(i,j)x^W_{uv}(i,j)\right)\\
  &+ \sum_{uv \in E_{\partial}}\min_{i,j}(\theta_{uv}(i,j) - \delta_{uv}(i) - \delta_{vu}(j))
  \end{split}
  \raisetag{2\normalbaselineskip}
\end{equation}
subject to the following constraints for $W \in \{S, T\}$:
\begin{equation}
  \label{eq:blockconstr}
  \def\arraystretch{1.2}
  \begin{array}{ll}
    \displaystyle\sum_i x^W_{u}(i) = 1, \;
    & \forall u\in W,\; \forall i \in L\\
    \sum_{j} x^W_{uv}(i,j) = x^W_{u}(i)
    & \forall (u,v) \in E_{W},\; \forall i\in L,\\
    \sum_{i} x^W_{uv}(i,j) = x^W_{v}(j)
    & \forall (u,v) \in E_{W},\; \forall j\in L.\\
    x^W_u(i) \ge 0,\;
    & \forall u\in W\; \forall i \in L.\\
    x^W_{uv}(i,j) \ge 0,\;
    & \forall (u,v)\in E_W,\; \forall i,\ j \in L.\\    
  \end{array}
\end{equation}
Here the consistency constraints of \eqref{eq:earthmoverlp} are only
relaxed for \emph{boundary edges} that go between $S$ and $T$, denoted
by $E_{\partial}$. Each subproblem (each minimization over $x^W$) is
an LP of the same form as \eqref{eq:earthmoverlp}, but is defined only
on the block $W$ (either $S$ or $T$, in this case). If $S = V$, the
block dual is equivalent to the primal LP \eqref{eq:earthmoverlp}. We
denote the constraint set \eqref{eq:blockconstr} by
$\mathcal{L}^W$. In these subproblems, the node costs $\theta_u(i)$
are modified by $\sum_{v:(u,v)\in E_{\partial}}\delta_{uv}(i)$, the
sum of the block dual variables coming from the other block. We can
thus rewrite each subproblem as an LP of the form:
\[\min_{x^W \in \mathcal{L}^W}\smashoperator{\sum_{u \in W}}\sum_{i \in L}\theta^{\delta}_u(i)x_u^W(i)+\sum_{\mathclap{uv \in
  E_{W}}}\ \sum_{ij}\theta_{uv}(i,j)x^W_{uv}(i,j),\]
where
\begin{equation}
  \label{eq:reparam}
  \theta^{\delta}_u(i) = \theta_u(i) + \sum_{v}\delta_{uv}(i).
\end{equation}
By definition, $\theta^{\delta}$ is equal to $\theta$ on the interior
of each block. It only differs from $\theta$ on the \emph{boundaries}
of the blocks. We show in Appendix \ref{appx:proofs} how to turn a
solution $\eta^*$ of \eqref{eq:pairwisedual} into a solution
$\delta^*$ of \eqref{eq:blockdual}; this block dual is efficiently
solvable. The form of $\theta^{\delta}$ suggests the following definition for
a \emph{restricted instance}:
\begin{definition}[Restricted Instance]
  Consider an instance $(G, \theta, w, L)$ of MAP inference, and let
  $S \subset V$. The instance \emph{restricted to $S$} with
  modification $\delta$ is given by:
  $$((S,E_S), \theta^{\delta}|_S, w|_{E_S}, L),$$ where $\theta^{\delta}$ is as in
  \eqref{eq:reparam} and is restricted to $S$, and the weights $w$ are
  restricted to $E_S$.
\end{definition}

Given a set $S$, let $\delta^*$ be a solution to the block dual
\eqref{eq:blockdual}. We essentially prove that if the instance
restricted to $S$, with modification $\delta^*$, is $(2,1)$-stable,
the LP solution $x$ to the original LP \eqref{eq:earthmoverlp}
(defined on the full, unmodified instance) takes binary values on $S$:
\begin{lemma}
  \label{lem:blocktight}
  Consider the instance $(G, \theta, w, L)$. Let $S \subset V$ be any
  subset of vertices, and let $\delta^*$ be any solution to the block dual
  \eqref{eq:blockdual}. Let $x$ be the solution to
  \eqref{eq:earthmoverlp} on this instance. If the restricted
  instance $$((S,E_S), \theta^{\delta^*}\vert_{S}, w\vert_{E_S}, L)$$
  is $(2,1)$-stable with solution $g_S$, then $x\vert_S = g_S$.
\end{lemma}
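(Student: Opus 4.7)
The plan is to combine strong LP duality with Theorem~\ref{thm:strongthm}. The overall strategy: show that at the optimal block dual variables $\delta^*$, the restriction $x|_S$ is an optimal solution of the inner block subproblem on $S$, which is exactly the LP relaxation \eqref{eq:earthmoverlp} on the restricted instance. Since that restricted instance is $(2,1)$-stable with integer solution $g_S$, Theorem~\ref{thm:strongthm} then forces $x|_S = g_S$.

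First I would invoke strong LP duality together with the correspondence between the pairwise and block duals (established in Appendix~\ref{appx:proofs}), which guarantees that the block dual value $P_B(\delta^*)$ equals the primal LP optimum, i.e.\ the primal LP value at $x$. Next I would decompose the primal objective at $x$ edge by edge, splitting $E=E_S\cup E_T\cup E_\partial$. Using the marginalization constraints $\sum_j x_{uv}(i,j)=x_u(i)$ and $\sum_i x_{uv}(i,j)=x_v(j)$ that $x$ satisfies on every edge, I can add and subtract $\delta^*$-terms on boundary edges:
\[
\sum_{uv\in E_\partial}\sum_{i,j}\theta_{uv}(i,j)\,x_{uv}(i,j) = \sum_{uv\in E_\partial}\sum_{i,j}\bigl[\theta_{uv}(i,j)-\delta^*_{uv}(i)-\delta^*_{vu}(j)\bigr] x_{uv}(i,j) + \sum_{uv\in E_\partial}\Bigl(\sum_i\delta^*_{uv}(i)\,x_u(i)+\sum_j\delta^*_{vu}(j)\,x_v(j)\Bigr).
\]
Regrouping the last double sum by endpoint and using the definition $\theta^{\delta^*}_u(i)=\theta_u(i)+\sum_{v:(u,v)\in E_\partial}\delta^*_{uv}(i)$, the primal value at $x$ rewrites as the sum over $W\in\{S,T\}$ of the restricted-instance objective evaluated at $x|_W$, plus the boundary term $\sum_{uv\in E_\partial}\sum_{i,j}[\theta_{uv}(i,j)-\delta^*_{uv}(i)-\delta^*_{vu}(j)]x_{uv}(i,j)$.

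Now observe that $x|_S$ and $x|_T$ are feasible for $\mathcal{L}^S$ and $\mathcal{L}^T$ respectively (they inherit the simplex and marginalization constraints), so each block contribution is at least the corresponding inner minimum in \eqref{eq:blockdual}. For each boundary edge, since $\{x_{uv}(i,j)\}$ is a probability distribution, the boundary contribution is at least $\min_{i,j}[\theta_{uv}(i,j)-\delta^*_{uv}(i)-\delta^*_{vu}(j)]$. Summing everything, the primal value at $x$ is at least $P_B(\delta^*)$; strong duality makes this an equality, so every one of these inequalities must be tight. In particular, $x|_S$ attains the minimum of the $S$-subproblem, which is exactly the LP~\eqref{eq:earthmoverlp} on the restricted instance $((S,E_S),\theta^{\delta^*}|_S,w|_{E_S},L)$. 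By $(2,1)$-stability of that instance and Theorem~\ref{thm:strongthm}, the unique optimal LP solution there is $g_S$, so $x|_S=g_S$.

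The main obstacle I expect is the reparameterization bookkeeping in the decomposition step: one must carefully check that the $\delta^*$ contributions from each boundary edge aggregate at the correct endpoint and match the node-potential modification $\theta^{\delta^*}$ in the $S$ and $T$ subproblems exactly, with no leftover cross terms. Once that algebra is verified, strong duality and Theorem~\ref{thm:strongthm} finish the proof without further work.
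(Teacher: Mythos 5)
Your proof is correct, but it takes a genuinely different route from the paper's. You argue on the primal side: decompose $Q(x)$ over $E_S\cup E_T\cup E_\partial$, shift the $\delta^*$ terms from the boundary edges onto the node potentials via the marginalization constraints, lower-bound each block contribution by the corresponding inner minimum of the block dual and each boundary-edge contribution by its pointwise minimum, and then use strong duality $Q(x)=B(\delta^*)$ to force every inequality tight, so that $x|_S$ is itself an optimal solution of the $S$-subproblem; Theorem~\ref{thm:strongthm} (uniqueness of the LP optimum on a $(2,1)$-stable instance) then gives $x|_S=g_S$. The paper instead works on the dual side: it applies strict complementary slackness (Property~\ref{prop:strictcompslack}) to the restricted instance to obtain a pairwise dual solution $\eta^S$ that is locally decodable to $g_S$, glues $\eta^S$ and $\delta^*$ into an optimal pairwise dual for the full instance via Proposition~\ref{lem:block2pair}, and concludes by complementary slackness (Property~\ref{prop:compslack}). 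Your version is somewhat more self-contained — it needs only strong duality of the block dual (Corollary~\ref{cor:blockstrong}) and Theorem~\ref{thm:strongthm}, and avoids the local-decodability machinery entirely — and it yields the slightly stronger intermediate fact that $x|_S$ is optimal for the restricted LP. The paper's certificate-based route is the one that meshes with the persistency literature and with the rest of the appendix (the same gluing construction is reused elsewhere). Two small points to make explicit if you write this up: strong duality $B(\delta^*)=Q(x)$ requires $\delta^*$ to be an \emph{optimal} block dual point (which is what ``solution'' means in the lemma statement), and the boundary-edge bound uses $\sum_{i,j}x_{uv}(i,j)=1$, which follows from the marginalization and normalization constraints.
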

Here $g_S$ is the exact solution to the restricted instance $((S,E_S),
\theta^{\delta^*}\vert_{S}, w\vert_{E_S}, L)$ with node costs modified
by $\delta^*$. This may or may not be equal to $g\vert_S$, the overall
exact solution restricted to the set $S$. If $g_S = g\vert_S$, Lemma
\ref{lem:blocktight} implies that the LP solution $x$ is \emph{persistent} on $S$:
\begin{corollary}
  \label{corr:persist}
  For an instance $(G, \theta, w, L)$, let $g$ and $x$ be solutions to
  \eqref{eq:umlobj} and \eqref{eq:earthmoverlp}, respectively. Let
  $S\subset V$ and $\delta^*$ a solution to the block dual for
  $S$. Assume the restricted instance $((S,E_S),
  \theta^{\delta^*}\vert_{S}, w\vert_{E_S}, L)$ is $(2,1)$-stable with
  solution $g\vert_S$. Then $x\vert_S = g\vert_S$; $x$ is persistent
  on $S$.
\end{corollary}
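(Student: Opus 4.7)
The plan is to reduce the corollary directly to Lemma \ref{lem:blocktight}, since the additional assumption in the corollary is precisely what is needed to identify the lemma's abstract solution $g_S$ with the concrete restriction $g\vert_S$. First, I would verify that the hypotheses listed in the corollary — namely that $S \subset V$, that $\delta^*$ solves the block dual \eqref{eq:blockdual} for $S$, and that the restricted instance $((S,E_S), \theta^{\delta^*}\vert_S, w\vert_{E_S}, L)$ is $(2,1)$-stable — are exactly those of Lemma \ref{lem:blocktight}, with the specific choice $g_S := g\vert_S$. The corollary's assumption that $g\vert_S$ is the (unique) solution of the restricted instance justifies this identification: it plays the role of the lemma's $g_S$. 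Applying Lemma \ref{lem:blocktight} then immediately yields $x\vert_S = g\vert_S$.

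Next, I would translate this equality into the stated persistency conclusion. By the definition given in the paper, $x$ is persistent at a vertex $u$ when $g(u) = i$ and $x_u(i) = 1$ for some $i$. Since $g\vert_S$ is integer-valued (as a feasible labeling restricted to $S$) and $x\vert_S = g\vert_S$, for every $u \in S$ we have $x_u(g(u)) = 1$ and $x_u(i) = 0$ for $i \ne g(u)$. Thus $x$ is persistent at every $u \in S$, which is exactly the conclusion.

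There is essentially no obstacle to overcome: the technical content of the claim is carried entirely by Lemma \ref{lem:blocktight}, whose proof I assume is available from the earlier discussion in the section. The role of the corollary is merely to record the practically useful special case in which the restricted instance's unique optimum happens to agree with the global optimum $g$ on $S$, so that stability of the restricted instance certifies persistency (rather than merely integrality on $S$) of the original LP solution.
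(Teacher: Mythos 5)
Your proposal is correct and matches the paper's own argument: the corollary is exactly Lemma \ref{lem:blocktight} in the special case $g_S = g\vert_S$, and the persistency statement then follows immediately from $x\vert_S = g\vert_S$ together with the normalization constraint $\sum_i x_u(i)=1$. No further comment is needed.
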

Appendix \ref{appx:proofs} contains a proof of Lemma \ref{lem:blocktight}.

Finally, we can reinterpret this result from the lens of stability by
defining additive perturbations of the node costs $\theta$. Let
$\bar{S}$ be the boundary of set $S$; i.e. the set of $s \in S$ such
that $s$ has a neighbor that is not in $S$.
\begin{definition}[$\epsilon$-bounded cost perturbation]
  \label{def:epspert}
  Given a subset $S \subset V$, node costs $\theta: V \times L \to
  \mathbb{R}$, and a function
  $$\epsilon: \bar{S} \times L \to \mathbb{R},$$ a cost function
  $\theta': V\times L \to \mathbb{R}$ is an $\epsilon$-bounded
  perturbation of $\theta$ (with respect to $S$) if the following
  equation holds for some $\psi$ with $|\psi_u(i)| \le
  |\epsilon_u(i)|$ for all $(u, i) \in V \times L$:
  \begin{equation*}
    \theta'_u(i) = \begin{cases}
      \theta_u(i) + \psi_u(i) & u \in \bar{S}\\
      \theta_u(i) & \text{otherwise.}
    \end{cases}
  \end{equation*}
\end{definition}
In other words, a perturbation $\theta'$ is allowed to differ from
$\theta$ by at most $|\epsilon_u(i)|$ for $u$ in the boundary of $S$,
and must be equal to $\theta$ everywhere else.

\begin{definition}[Stable with cost perturbations]
  \label{def:blockstab}
  A restricted instance $((S,E_S),\theta|_S,w|_{E_S},L)$ with
  solution $g_S$ is called $(\beta,\gamma,\epsilon)$-stable if for all
  $\epsilon$-bounded cost perturbations $\theta'$ of $\theta$, the
  instance $((S,E_S),\theta'|_S,w|_{E_S},L)$ is
  $(\beta,\gamma)$-stable. That is, $g_S$ is the unique solution to
  all the instances $((S,E_S),\theta'|_S,w'|_{E_S},L)$ with $\theta'$ an
  $\epsilon$-bounded perturbation of $\theta$ and $w'$ a
  $(\beta,\gamma)$-perturbation of $w$.
\end{definition}

\begin{theorem}
  \label{thm:blocktight}
  Consider an instance $(G,\theta,w,L)$ with subset $S$, let $g$ and
  $x$ be solutions to \eqref{eq:umlobj} and \eqref{eq:earthmoverlp} on
  this instance, respectively, and let $\delta^*$ be a solution to the
  block dual \eqref{eq:blockdual} with blocks $(S, V\setminus
  S)$. Define $\epsilon^*_u(i) = \sum_{v:(u,v)\in
    E_{\partial}}\delta^*_{uv}(i)$. If the restricted
  instance $$((S,E_S),\theta\vert_S, w\vert_{E_S},L)$$ is
  $(2,1,\epsilon^*)$-stable with solution $g\vert_S$, then the LP $x$ is
  persistent on $S$.
\end{theorem}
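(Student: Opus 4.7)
The plan is to reduce Theorem \ref{thm:blocktight} directly to Lemma \ref{lem:blocktight} (equivalently Corollary \ref{corr:persist}) by recognizing that the reparameterized costs $\theta^{\delta^*}$ are exactly an $\epsilon^*$-bounded cost perturbation of $\theta$ in the sense of Definition \ref{def:epspert}. All the work is in checking this identification; once it is in place, the theorem follows from a single application of Corollary \ref{corr:persist}.

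First, I would unpack the definition of $\theta^{\delta^*}$ from \eqref{eq:reparam}. For a vertex $u \in S$ the sum $\sum_{v}\delta^*_{uv}(i)$ only ranges over neighbors $v$ such that $(u,v) \in E_{\partial}$, since $\delta^*$ is defined only on boundary edges. In particular, if $u \in S \setminus \bar{S}$ (that is, $u$ has no neighbor outside $S$) then $\theta^{\delta^*}_u(i) = \theta_u(i)$; and if $u \in \bar{S}$ then $\theta^{\delta^*}_u(i) - \theta_u(i) = \epsilon^*_u(i)$ by the definition of $\epsilon^*$. Setting $\psi_u(i) = \epsilon^*_u(i)$ on $\bar{S}$ (and zero elsewhere) gives $|\psi_u(i)| \le |\epsilon^*_u(i)|$, so $\theta^{\delta^*}|_S$ is an $\epsilon^*$-bounded perturbation of $\theta|_S$ with respect to $S$.

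Next, I would invoke the assumption that the restricted instance $((S,E_S),\theta|_S,w|_{E_S},L)$ is $(2,1,\epsilon^*)$-stable with solution $g|_S$. By Definition \ref{def:blockstab}, this stability is preserved under every $\epsilon^*$-bounded cost perturbation. Applied to the specific perturbation $\theta^{\delta^*}|_S$ identified in the previous step, it yields that the instance $((S,E_S),\theta^{\delta^*}|_S,w|_{E_S},L)$ is $(2,1)$-stable, still with unique solution $g|_S$. In particular, the ``restricted solution'' $g_S$ appearing in Lemma \ref{lem:blocktight} coincides with the global MAP restricted to $S$, namely $g|_S$.

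Finally, I would apply Lemma \ref{lem:blocktight} (or directly Corollary \ref{corr:persist}) to the pair $(S, \delta^*)$: the hypothesis is exactly what the previous step established, so the conclusion gives $x|_S = g_S = g|_S$, which is the definition of persistency of $x$ on $S$. The only step that requires any care is the bookkeeping in the first paragraph, ensuring that $\theta^{\delta^*}$ is indeed zero on the interior of $S$ and matches $\epsilon^*$ on $\bar{S}$; everything else is an immediate chaining of Definition \ref{def:blockstab} with the previously established Lemma \ref{lem:blocktight}.
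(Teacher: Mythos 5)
Your proposal is correct and follows exactly the paper's own route: it identifies $\theta^{\delta^*}\vert_S$ as an $\epsilon^*$-bounded cost perturbation of $\theta\vert_S$ (via Definition \ref{def:epspert} with $\psi = \epsilon^*$ on $\bar{S}$), uses $(2,1,\epsilon^*)$-stability to conclude the reparametrized restricted instance is $(2,1)$-stable with solution $g\vert_S$, and then applies Lemma \ref{lem:blocktight} / Corollary \ref{corr:persist}. The extra bookkeeping you do in the first paragraph is exactly the content the paper compresses into ``this follows immediately from Definition \ref{def:blockstab}, the definition of $\epsilon^*$, and Corollary \ref{corr:persist}.''
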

\begin{proof}
  This follows immediately from Definition \ref{def:blockstab}, the
  definition of $\epsilon^*$, and Corollary \ref{corr:persist}.
\end{proof}

Definition \ref{def:blockstab} and Theorem \ref{thm:blocktight} provide
the connection between the dual decomposition framework and stability:
by requiring stability to additive perturbations of the node costs
along the boundary of a block $S$, where the size of the perturbation
is determined by the block dual variables, we can effectively isolate
$S$ from the rest of the instance and apply stability to the modified
subproblem.

In Appendix \ref{appx:combine}, we show how to use the dual
decomposition techniques from this section to combine stability with
other structural reasons for persistency of the LP \emph{on the same
  instance}.

\section{FINDING STABLE BLOCKS}
\label{sec:checking}
In this section, we present an algorithm for finding stable blocks in
an instance. We begin with a procedure for testing
$(\beta,\gamma)$-stability as defined in Definition
\ref{def:strongstab}. \citet{LanSonVij18} prove that it is sufficient
to look for labelings that violate stability in the \emph{adversarial
  perturbation}
\begin{equation*}
  w^*(u,v) = \begin{cases}
    \gamma w(u,v) & g(u) \ne g(v)\\
    \frac{1}{\beta} w(u,v) & g(u) = g(v),
  \end{cases}
\end{equation*}
which tries to make the exact solution $g$ as bad as possible. With
that in mind, we can try to find a labeling $f$ such that $f \ne g$,
subject to the constraint that $Q^*(f) \le Q^*(g)$ (here $Q^*$ is the
objective with costs $\theta$ and weights $w^*)$. The instance is
$(\beta, \gamma)$-stable if and only if no such $f$ exists. We can
write such a procedure as the following optimization problem:
\begin{equation}
  \label{eq:mostvio_ILP}
  \def\arraystretch{1.2}
  \begin{array}{cll}
    \displaystyle\underset{x}{\text{max}} & \multicolumn{2}{l}{\frac{1}{2n}\displaystyle\sum_{u\in V}\displaystyle\sum_{i \in L}|x_u(i) - x^g_u(i)|}\\
    \text{s.t.}\;
    &\displaystyle\sum_i x_{u}(i) = 1 \;
    & \forall u\in V,\ \forall i \in L,\\
    &\sum_i x_{uv}(i,j) = x_v(j),\;
    & \forall (u,v) \in E,\ \forall j \in L\\
    &\sum_j x_{uv}(i,j) = x_u(i),\;
    & \forall (u,v) \in E,\ \forall i \in L\\
    &x_u(i) \in \{0,1\}\;
    & \forall u \in V,\ i \in L\\
    &x_{uv}(i,j) \in \{0,1\}\;
    & \forall (u,v) \in E,\ \forall i,j \in L\\
    &Q^*(x) \le Q^*(g)
  \end{array}
\end{equation}
The first five sets of constraints ensure that $x$ forms a feasible
integer labeling $f$.  The objective function captures the normalized
Hamming distance between this labeling $f$ and the solution $g$; it is
linear in the decision variables $x_u$ because $g$ is
fixed---$x^g_u(i) = 1$ if $g(u) = i$ and 0 otherwise.  Of course, the
``objective constraint'' $Q^*(x) \le Q^*(g)$ is also linear in $x$. We
have only linear and integrality constraints on $x$, so we can solve
\eqref{eq:mostvio_ILP} with a generic ILP solver such as Gurobi
\citep{gurobi}. This procedure is summarized in Algorithm
\ref{alg:checkstab}. Put simply, the algorithm tries to find the
labeling $f$ that is most different from $g$ (in Hamming distance)
subject to the constraint that $Q^*(f) \le Q^*(g)$. By construction,
the instance is stable if and only if the optimal objective value of
this ILP is 0. If there is a positive objective value, there is some
$f$ with $f \ne g$ but $Q^*(f) \le Q^*(g)$; this violates
stability. The program is always feasible because $g$ satisfies all
the constraints. Because it solves an ILP, \texttt{CheckStable} is not
a polynomial time algorithm, but we were still able to use it on
real-world instances of moderate size in Section \ref{sec:experiments}.

\begin{algorithm}[t]
  \caption{\texttt{CheckStable}$(g, \beta, \gamma)$}
  \label{alg:checkstab}
  Given $g$, compute the adversarial \bg-perturbation $w^*$ for $g$.

  Construct ILP $\mathcal{I}$ according to \eqref{eq:mostvio_ILP} using $g$ and $w^*$.

  Set $x, d$ = \texttt{GenericILPSolver}$(\mathcal{I})$

  \eIf {$d > 0$} {
    \Return $x$ \emph{// instance is not stable}
  } {
    \Return \texttt{None} \emph{// instances is stable}
  }
\end{algorithm}

We now describe our heuristic algorithm for finding regions of an
input instance that are $(2,1)$-stable after their boundary costs are
perturbed. Corollary \ref{corr:persist} implies that we do \emph{not}
need to test for $(2,1)$-stability for \emph{all} $\epsilon^*$-bounded
perturbations of node costs---we can simply check with respect to the
one given by \eqref{eq:reparam} with $\delta = \delta^*$. That is, we
need only check for $(2,1)$-stability in the instance with node costs
$\theta^{\delta^*}$. This is enough to
guarantee persistency.

In each iteration, the algorithm begins with a partition (henceforth
``decomposition'' or ``block decomposition'') of the nodes $V$ into
disjoint sets $(S_1, \ldots, S_B)$. It then finds a block dual
solution for each $S_b$ (see Appendix \ref{appx:alg} for details) and
computes the restricted instances using the optimal block dual
variables to modify the node costs. Next, it uses Algorithm
\ref{alg:checkstab} to check whether these modified instances are
$(2,1)$-stable. Based on the results of \texttt{CheckStable}, we
either update the previous decomposition or verify that a block is
stable, then repeat.

All that remains are the procedures for initializing the algorithm and
updating the decomposition in each iteration given the results of
\texttt{CheckStable}. The initial decomposition consists of $|L| + 1$
blocks, with
\begin{equation}
  \label{eq:blockdecomp}
  S_b = \{u | g(u) = b \text{ and } \forall (u,v) \in E,\  g(v) = b\}.
\end{equation}
So $|L|$ blocks consist of the interiors of the label sets of $g$---a
vertex $u$ belongs to $S_b$ if $u$ and all its neighbors have
$g(\cdot) = b$. The boundary vertices---$u \in V$ such that there is
some $(u,v) \in E$ with $g(u) \ne g(v)$---are added to a special
\emph{boundary block} denoted by $S_*$. Some blocks may be empty if
$g$ is not surjective.

In an iteration of the algorithm, for every block,
\texttt{CheckStable} returns a labeling $f_b$ that satisfies
$Q_{S_b}^{\theta',w^*}(f_b) \le Q_{S_b}^{\theta',w^*}(g\vert_{S_b})$
and might also have $f_b \ne g\vert_{S_b}$. If $f_b = g\vert_{S_b}$,
the block is stable and we do nothing. Otherwise, we \emph{remove} the
vertices $V_{\Delta} = \{u \in S_b : f_b(u) \ne g\vert_{S_b}(u)\}$ and
add them to the boundary block $S_*$.

Finally, we try to \emph{reclaim} vertices from the old boundary
block. Like all the other blocks, the boundary block gets tested for
stability in each iteration. Some of the vertices in this block may
have $f_b(u) = g\vert_{S_b}(u)$. We call this the \emph{remainder set}
$R$. We run breadth-first-search in $R$ to identify connected
components of vertices that get the same label from $g$. Each of these
components becomes its own new block, and is added to the block
decomposition for the next step. This heuristic prevents the boundary
block from growing too large and significantly improves our
experimental results, since the boundary block is rarely
stable. The entire procedure is summarized in Algorithm
\ref{alg:blockstab}.

\begin{algorithm}[t]
  \caption{\texttt{BlockStable}$(g, \beta, \gamma)$}
  \label{alg:blockstab}
  Given $g$, create blocks $(S^1_1, \ldots, S^1_{k}, S^1_*)$ with
  \eqref{eq:blockdecomp}.

  Initialize $K^1 = |L|$.

  \For{$t \in \{1,\ldots, M\}$} {

    Initialize $S^{t+1}_* = \emptyset$.

    \For{$b \in \{1,\ldots, K^t, *\}$} {
      Find block dual solution $\delta^*$ for $(S^t_b, V\setminus S^t_b)$.
      
      Form $\mathcal{I} = ((S^t_b, E_{S_b}), \theta'\vert_{S^t_b},
      w\vert_{E_{S_b}}, L)$ using $\delta^*$ and \eqref{eq:reparam}.

      Set $f_b =$ \texttt{CheckStable}($g\vert_{S_b^t}, \beta, \gamma$) run on instance
      $\mathcal{I}$.

      Compute $V_{\Delta} = \{u \in S^t_b | f_b(u) \ne g(u)\}$.

      Set $S^{t+1}_b = S^t_b \setminus V_{\Delta}$

      Set $S^{t+1}_* = S^{t+1}_* \cup V_{\Delta}$.

      \If{$b = *$} {
        Set $R = S^t_* \setminus V_{\Delta}$.

        Let $(S^{t+1}_{K^t+1}, \ldots, S^{t+1}_{K^t+p+1})$ = \texttt{BFS}$(R)$ be the $p$ connected components in $R$ that get the same label from $g$.

        Set $K^{t+1} = K^t + p$.
      }
    }
  }
\end{algorithm}

\section{EXPERIMENTS}
\label{sec:experiments}
We focus in this section on instances where the pairwise LP performs
very well. The examples studied here are more extensively examined in
\citet{KomoSurvey}, where they also compare the effectiveness of the
LP to other MAP inference algorithms. Most importantly, though, they
observe that the pairwise LP takes fractional values only at a very
small percentage of the nodes on these instances. This makes them good
candidates for a stability analysis.
\begin{figure*}
  \begin{subfigure}{.16\linewidth}
    \centering
    \includegraphics[width=\linewidth, height=60px]{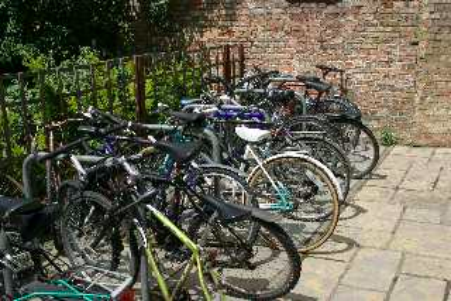}
  \end{subfigure}%
  \begin{subfigure}{.16\linewidth}
    \centering
    \includegraphics[width=\linewidth, height=60px]{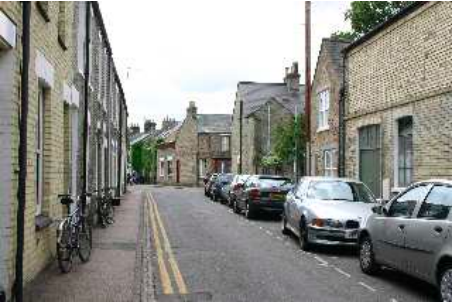}
  \end{subfigure}%
  \begin{subfigure}{.16\linewidth}
    \centering
    \includegraphics[width=\linewidth, height=60px]{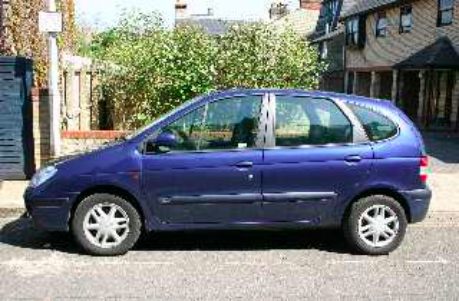}
  \end{subfigure}%
  \begin{subfigure}{.16\linewidth}
    \centering
    \includegraphics[width=\linewidth, height=60px]{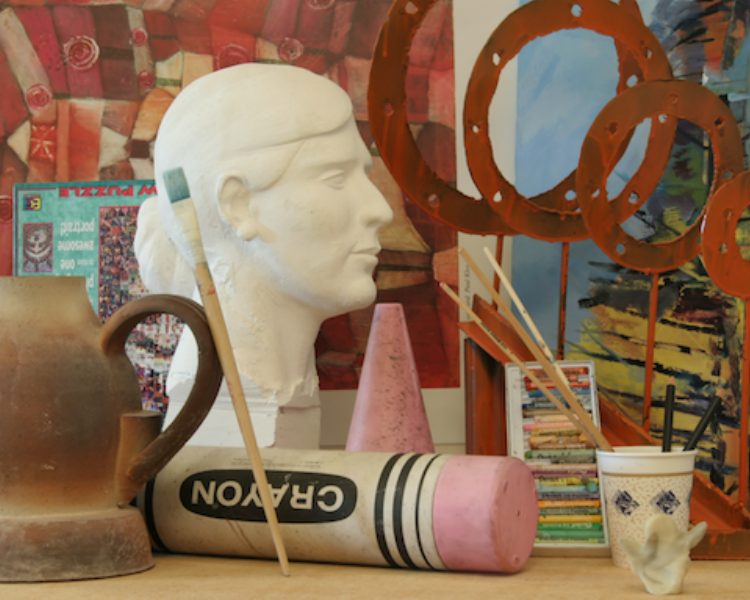}
  \end{subfigure}%
  \begin{subfigure}{.16\linewidth}
    \centering
    \includegraphics[width=\linewidth, height=60px]{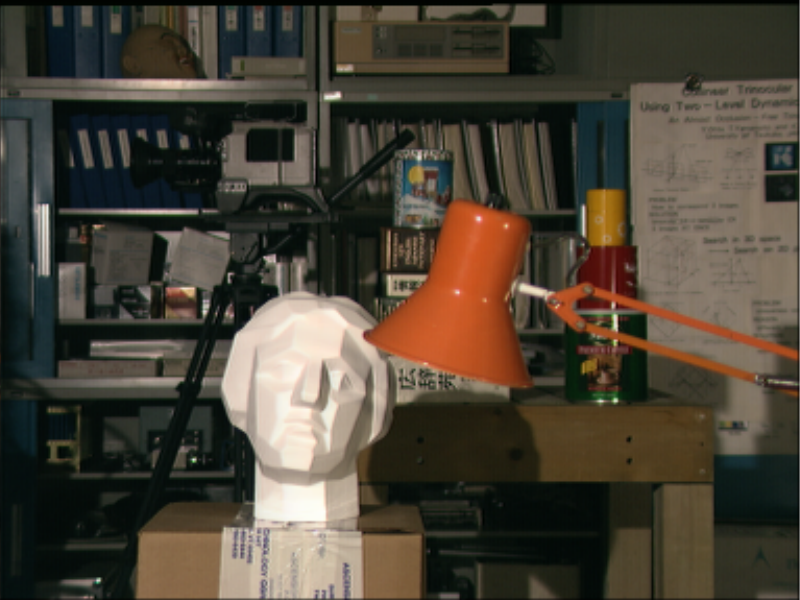}
  \end{subfigure}%
  \begin{subfigure}{.16\linewidth}
    \centering
    \includegraphics[width=\linewidth, height=60px]{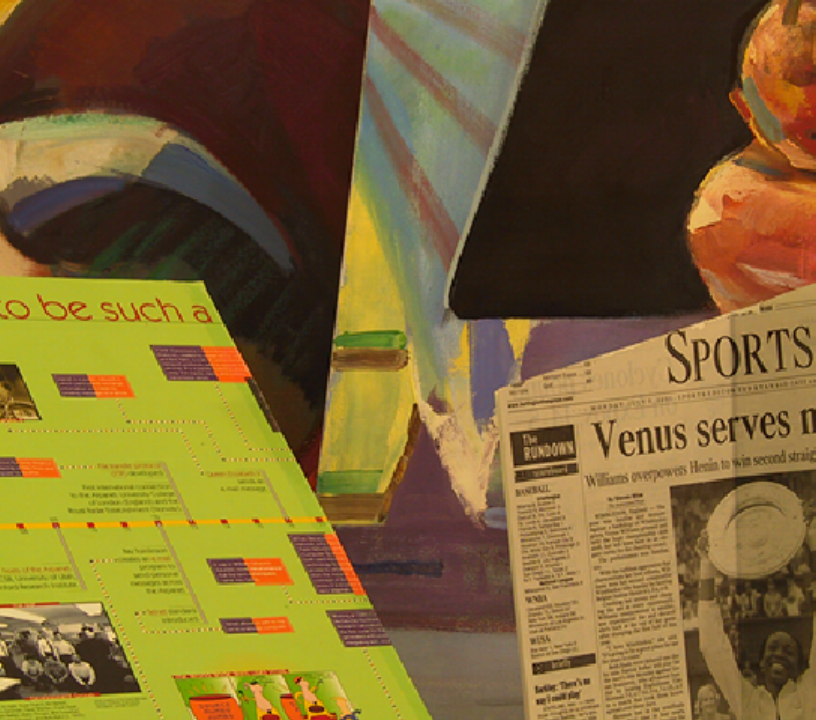}
  \end{subfigure}\\[1ex]
  \begin{subfigure}{.16\linewidth}
    \centering
    \includegraphics[width=\linewidth, height=60px]{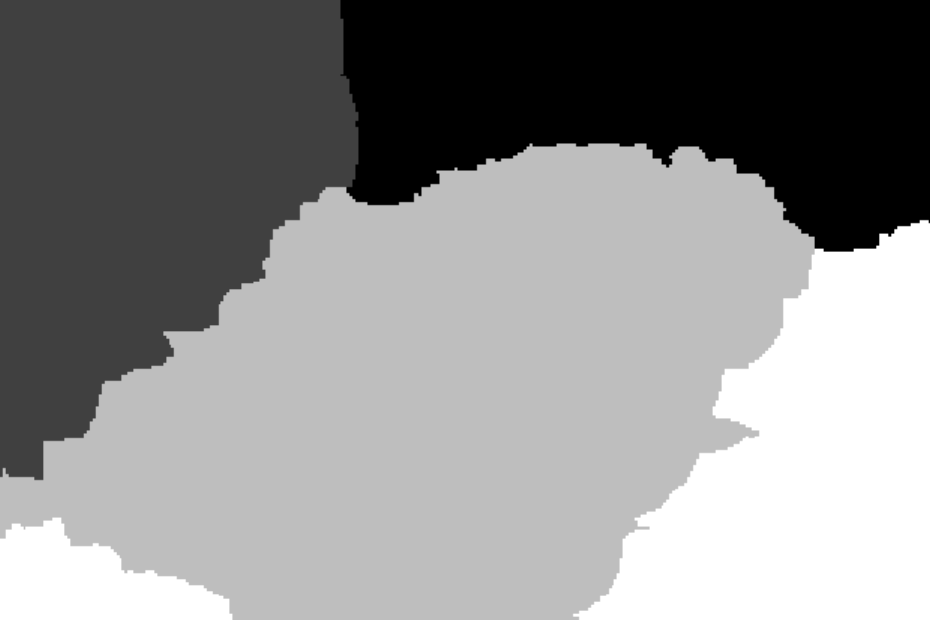}
  \end{subfigure}%
  \begin{subfigure}{.16\linewidth}
    \centering
    \includegraphics[width=\linewidth, height=60px]{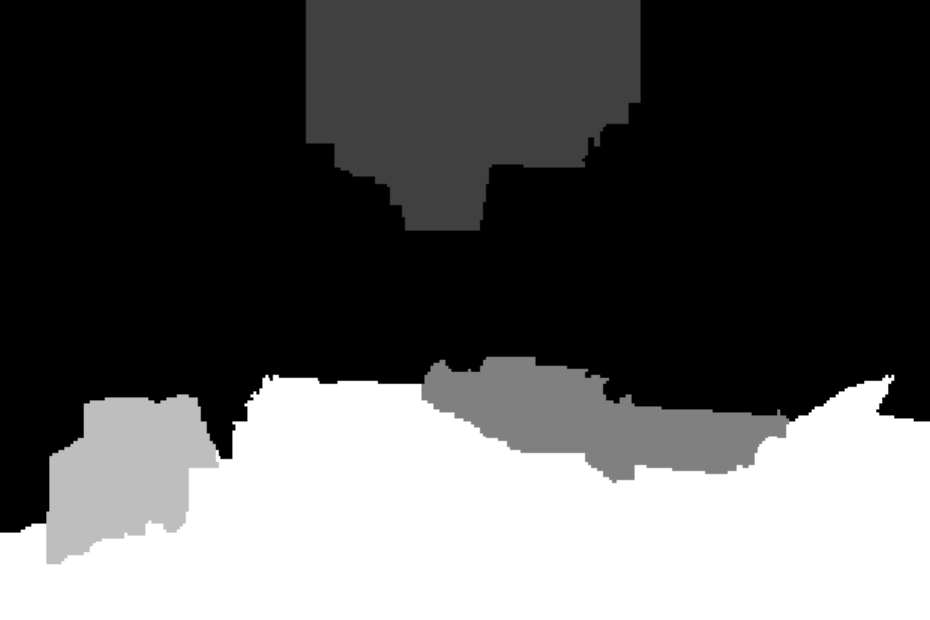}
  \end{subfigure}%
  \begin{subfigure}{.16\linewidth}
    \centering
    \includegraphics[width=\linewidth, height=60px]{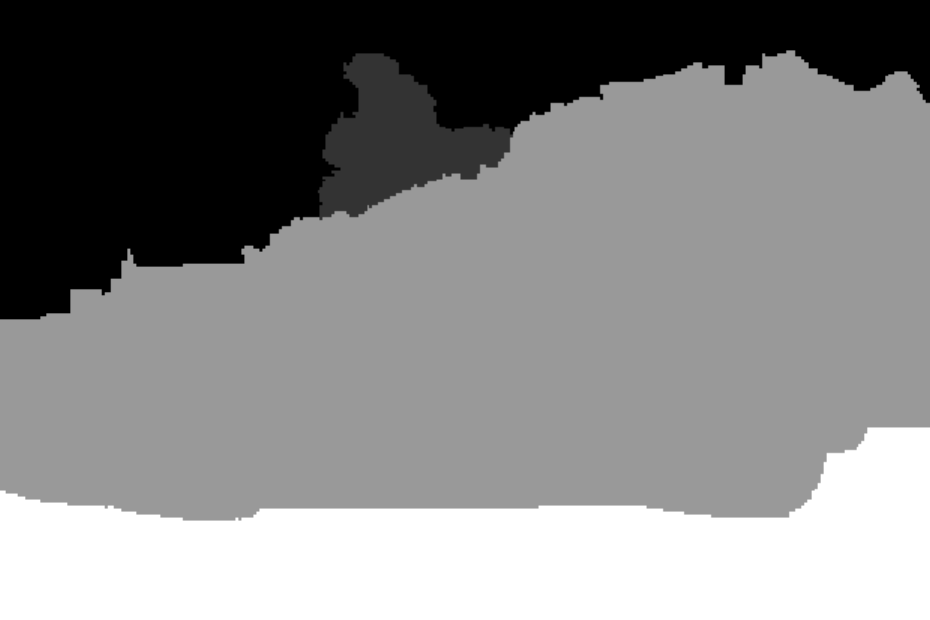}
  \end{subfigure}%
  \begin{subfigure}{.16\linewidth}
    \includegraphics[width=\linewidth, height=60px]{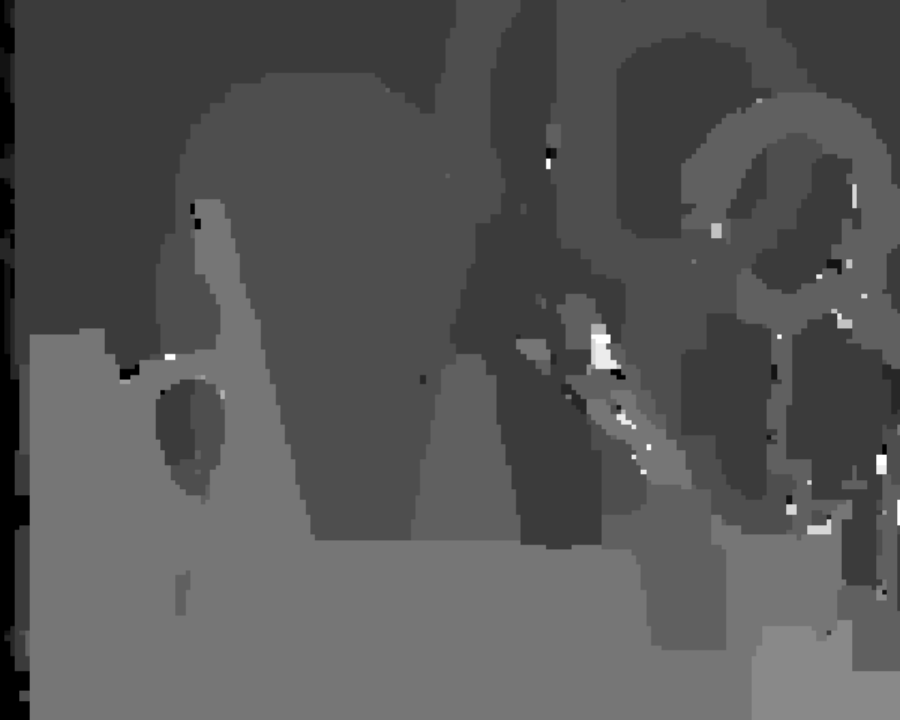}
  \end{subfigure}%
  \begin{subfigure}{.16\linewidth}
    \centering
    \includegraphics[width=\linewidth, height=60px]{144_glabels.pdf}
  \end{subfigure}%
  \begin{subfigure}{.16\linewidth}
    \centering
    \includegraphics[width=\linewidth, height=60px]{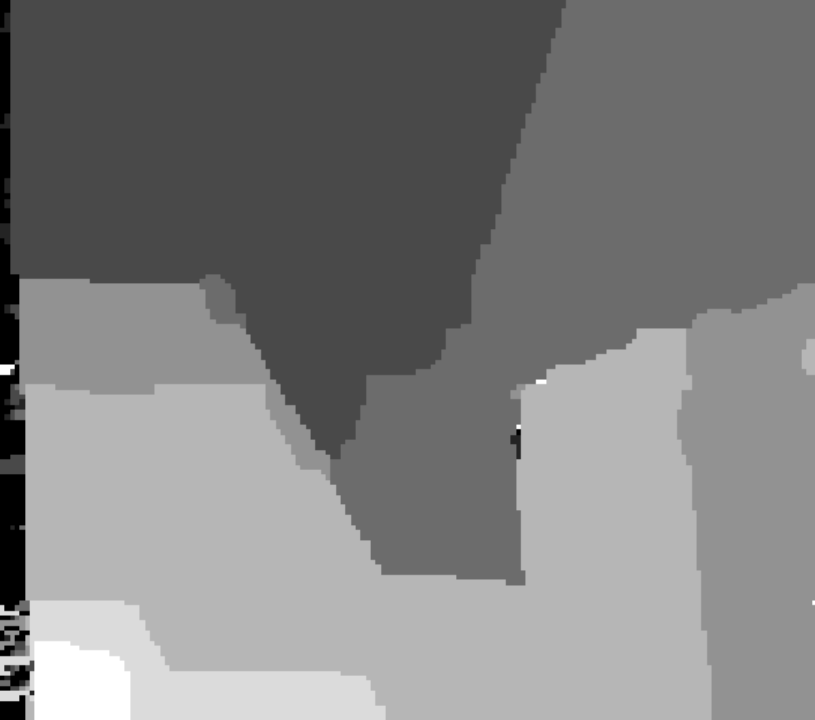}
  \end{subfigure}\\[1ex]
  \begin{subfigure}{.16\linewidth}
    \centering
    \includegraphics[width=\linewidth, height=60px]{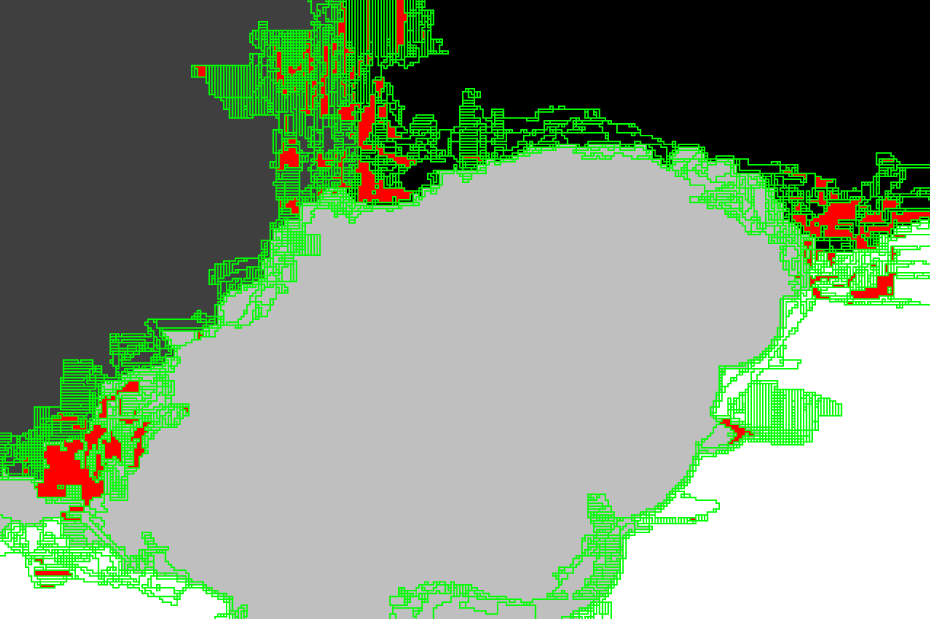}
  \end{subfigure}%
  \begin{subfigure}{.16\linewidth}
    \centering
    \includegraphics[width=\linewidth, height=60px]{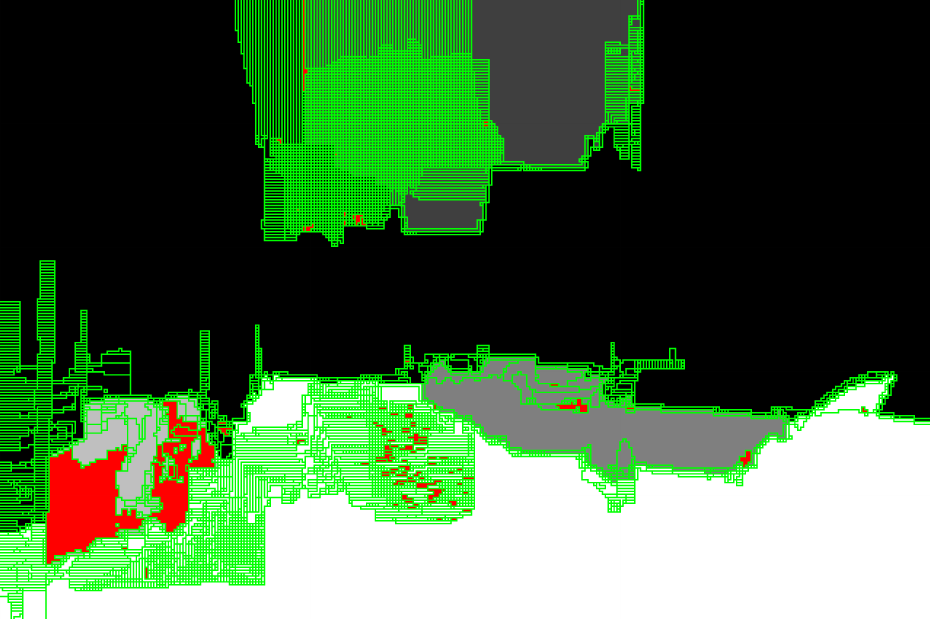}
  \end{subfigure}%
  \begin{subfigure}{.16\linewidth}
    \centering
    \includegraphics[width=\linewidth, height=60px]{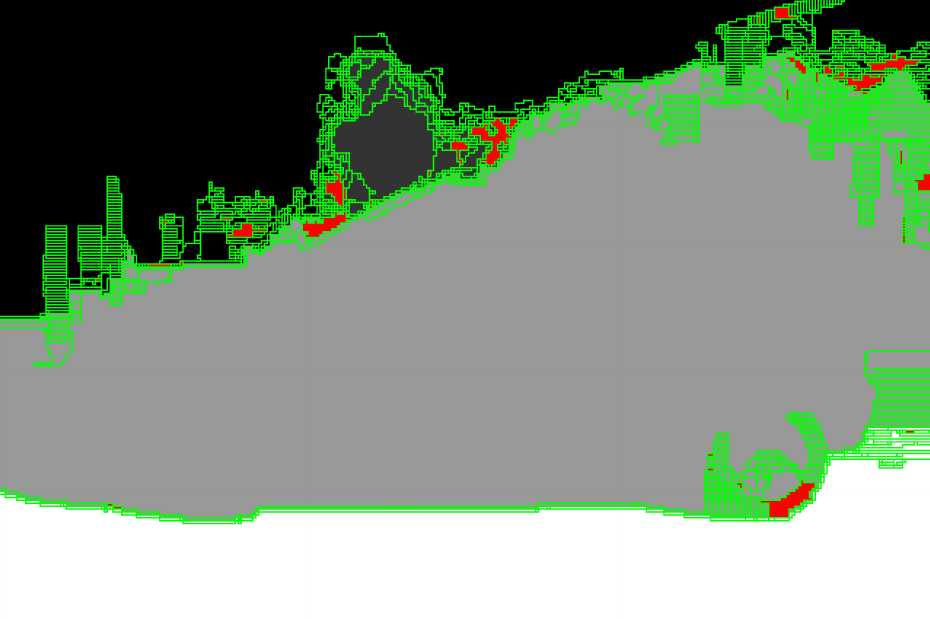}
  \end{subfigure}%
  \begin{subfigure}{.16\linewidth}
    \includegraphics[width=\linewidth, height=60px]{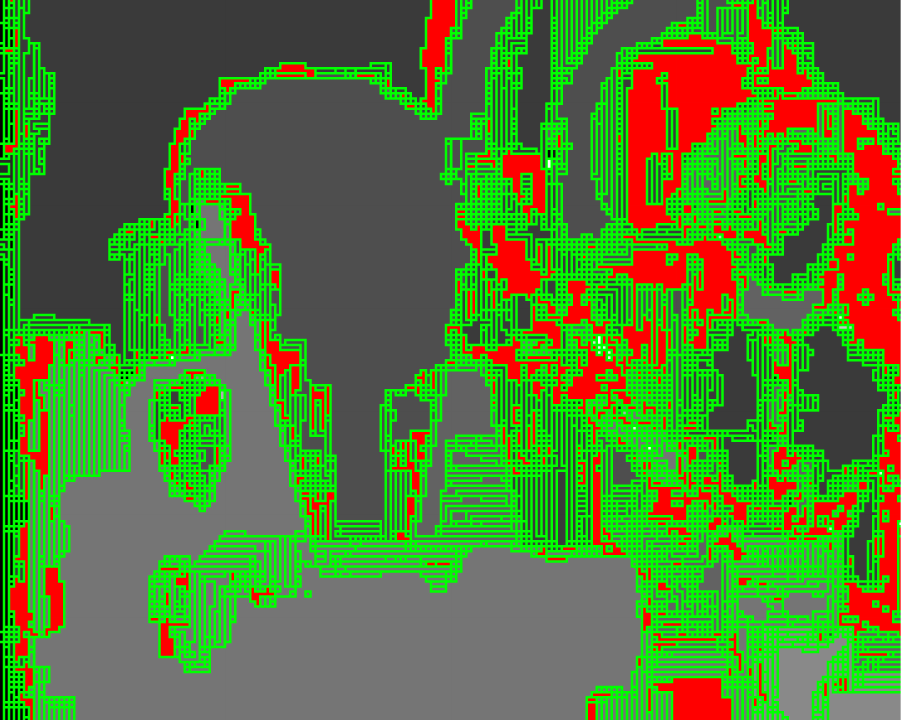}
  \end{subfigure}%
  \begin{subfigure}{.16\linewidth}
    \centering
    \includegraphics[width=\linewidth, height=60px]{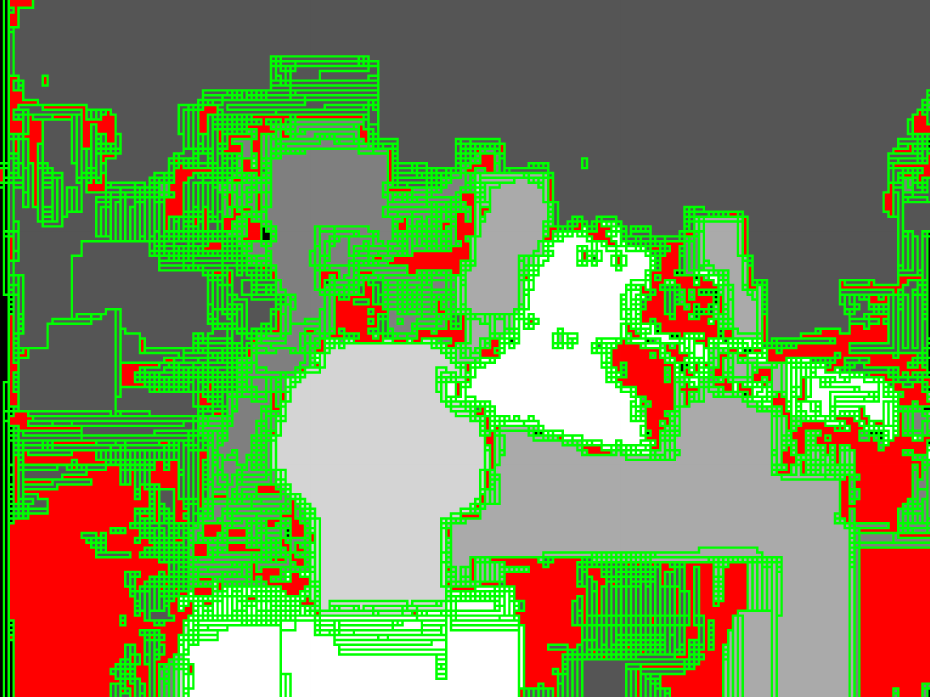}
  \end{subfigure}%
  \begin{subfigure}{.16\linewidth}
    \centering
    \includegraphics[width=\linewidth, height=60px]{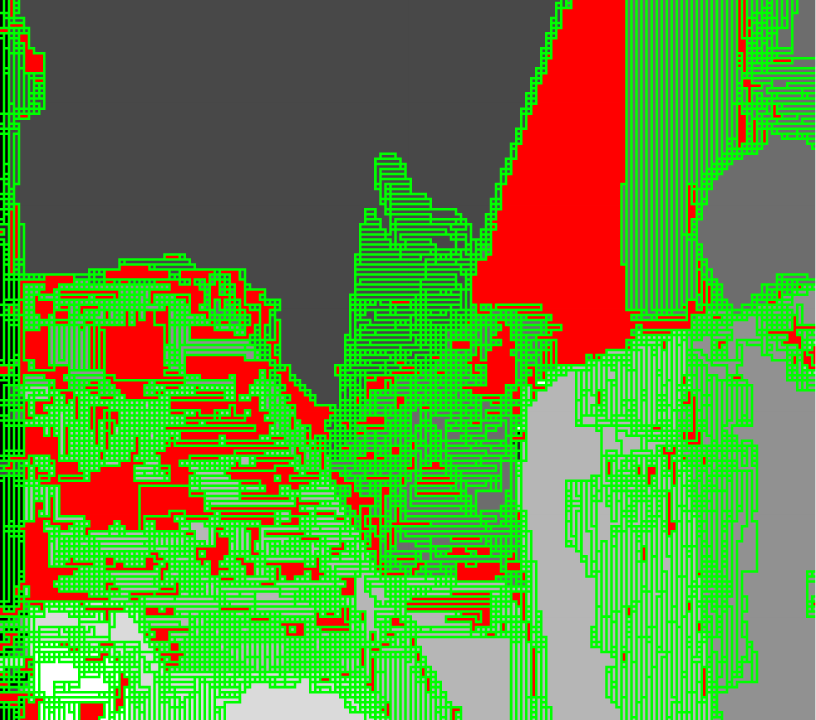}
  \end{subfigure}
  \caption{Columns 1-3: object segmentation instances; Bikes, Road,
    Car. Columns 4-6: stereo instances; Art, Tsukuba, Venus. Row 1:
    original image for the instance. Row 2: MAP solution for the
    model. Row 3: results of Algorithm \ref{alg:blockstab}. Regions
    where the algorithm failed to find a nontrivial stable
    decomposition are shown in red. Boundaries between blocks are
    shown in green. }
  \label{fig:instances}
\end{figure*}

\subsection{Object Segmentation}
For the object segmentation problem, the goal is to partition the
pixels of the input image into a handful of different \emph{objects}
based on the semantic content of the image. The first two rows of
figure \ref{fig:instances} show some example object segmentation
instances.  We study a version of the segmentation problem where the
number of desired objects is known. We use the model of
\citet{alahari2010dynamic}; full details about the MRFs used in this
experiment can be found in Appendix \ref{appx:exp}. Each instance has
68,160 nodes and either five or eight labels, and we ran Algorithm
\ref{alg:blockstab} for $M=50$ iterations to find $(2,1)$-stable
blocks. The LP \eqref{eq:earthmoverlp} is persistent on 100\% of the
nodes for all three instances we study.

Row 3 of Figure \ref{fig:instances} shows the output of Algorithm
\ref{alg:blockstab} on each segmentation instance. The red vertices
are regions where the algorithm was unable to find a large stable
block. The green pixels represent a boundary between blocks,
demonstrating the block structure. The largest blocks seem to
correspond to objects in the original image (and regions in the MAP
solution).

One interesting aspect of these instances is the large number of
stable blocks $S$ with $|S|=1$ for the Road instance (Column 2). If
the LP is persistent at a node $u$, there is a trivial decomposition
in which $u$ belongs to its own stable block (see Appendix
\ref{appx:blocksize} for discussion on block size). However, the
existence of stable blocks with size $|S| > 1$ is not implied by
persistency, so the presence of such blocks means the instances have
special structure. The red regions in Figure \ref{fig:instances}, Row
3 could be replaced by stable blocks of size one. However, Algorithm
\ref{alg:blockstab} did not \emph{find} the trivial decomposition for
those regions, as it did for the center of the Road instance. We
believe the large number of blocks with $|S|=1$ for the Road instance
could therefore be due to our ``reclaiming'' strategy in Algorithm
\ref{alg:blockstab}, which does not try to merge together reclaimed
blocks, rather than a lack of stability in that region.

\subsection{Stereo Vision}
As we discussed in Section \ref{sec:intro}, the stereo vision problem
takes as input two images $L$ and $R$ of the same scene, where $R$ is
taken from slightly to the right of $L$. The goal is to output a depth
label for each pixel in $L$ that represents how far that pixel is from
the camera. Depth is inversely proportional to the disparity (how much
the pixel moves) of the pixel between the images $L$ and $R$. So the
goal is to estimate the (discretized) disparity of each pixel. The
first two rows of Figure \ref{fig:instances} show three example
instances and their MAP solutions. We use the MRF formulation of
\citet{BoykovExpansion} and \citet{tappen2003comparison}. The exact
details of these stereo MRFs can be found in Appendix
\ref{appx:exp}. These instances have between 23,472 and 27,684 nodes,
and between 8 and 16 labels. The LP \eqref{eq:earthmoverlp} is
persistent on 98-99\% of each instance.

Row 3 of Figure \ref{fig:instances} shows the results of Algorithm
\ref{alg:blockstab} for the stereo instances. As with object
segmentation, we observe that the largest stable blocks tend to
coincide with the actual objects in the original image. Compared to
segmentation, fewer vertices in these instances seem to belong to
large stable blocks. We believe that decreased resolution may play a
role in this difference. The computational challenge of scaling
Algorithms \ref{alg:checkstab} and \ref{alg:blockstab} to the stereo
model forced us to use downsampled (0.5x or smaller) images to form
the stereo MRFs. Brief experiments with higher resolution suggest that
improving the scalability of Algorithm \ref{alg:blockstab} is an
interesting avenue for improving these results.

The results in Figure \ref{fig:instances} demonstrate that large stable
regions exist in practical instances. Theorem \ref{thm:blocktight}
guarantees that solutions to \eqref{eq:earthmoverlp} are persistent on
these blocks, so stability provides a novel explanation for the
persistency of the LP relaxation in practice.

\section{DISCUSSION}
The block stability framework we presented helps to understand the
tightness and persistency of the pairwise LP relaxation for MAP
inference. Our experiments demonstrate that large blocks of common
computer vision instances are stable. While our experimental results
are for the Potts model, our extension from $(\beta,
\gamma)$-stability to block stability uses no special properties of
the Potts model and is completely general. If a $(\beta,
\gamma)$-stability result similar to Theorem \ref{thm:strongthm} is
given for other pairwise potentials, the techniques used here
immediately give the analogous version of Theorem
\ref{thm:blocktight}. Our results thus give a connection between dual
decomposition and stability.

The method used to prove the results in Section \ref{sec:blocktight}
can even extend beyond stability. We only need stability to apply
Theorem \ref{thm:strongthm} to a modified block. Instead of stability,
we could plug in any result that guarantees the pairwise LP on that
block has a \emph{unique} integer solution. Appendix \ref{appx:theory}
gives an example of incorporating stability with tree structure on the
same instance. Combining different structures to fully explain
persistency on \emph{real-world} instances will require new
algorithmic insight.

The stability of these instances suggests that designing new inference
algorithms that directly take advantage of stable structure is an
exciting direction for future research. The models examined in our
experiments use mostly hand-set potentials. In settings where the
potentials are learned from training data, is it possible to encourage
stability of the learned models?
\fxnote{summarize contributions again?}

\subsection*{Acknowledgments}
The authors would like to thank Fredrik D. Johansson for his insight
during many helpful discussions. This work was supported by NSF AitF
awards CCF-1637585 and CCF-1723344. AV is also supported by NSF Grant
No.~CCF-1652491.

\subsection*{References}
\begingroup
\renewcommand{\section}[2]{}%
\bibliography{uml}
\endgroup
\newpage
\newpage
\clearpage
\appendix
\section{Theory Details}
\label{appx:theory}

In this appendix, we give a complete exposition and proof of Lemma
\ref{lem:blocktight} and use it to prove Theorem \ref{thm:blocktight}
from Section \ref{sec:blocktight}. We also discuss a subtlety regarding
the size of stable blocks, and show that adding perturbations to the
node costs seems necessary to prove Lemma \ref{lem:blocktight}.

\subsection{Proofs of Lemma \ref{lem:blocktight} and Theorem \ref{thm:blocktight}}
\label{appx:proofs}
We now more formally develop the connection between the block dual
\eqref{eq:blockdual} and block stability. To begin, the \emph{pairwise
  dual} of the LP \eqref{eq:earthmoverlp} is given by:
\begin{equation}
  \begin{split}
    \max_{\eta} P(\eta) = \max_{\eta} \sum_{u \in V}\min_i (\theta_u(i) + \sum_{v}\eta_{uv}(i))\\
    + \sum_{uv \in E}\min_{i,j}(\theta_{uv}(i,j) - \eta_{uv}(i) - \eta_{vu}(j))
  \end{split}
\end{equation}
This can be derived by introducing Lagrange multipliers $\eta$ on the
two consistency constraints for each edge $(u,v) \in E$ and each $i \in L$:
\begin{equation*}
  \begin{split}
    \sum_i x_{uv}(i,j) = x_v(j)\; \forall j\\
    \sum_j x_{uv}(i,j) = x_u(i)\; \forall i
  \end{split}
\end{equation*}
 A dual point $\eta$ is said to be \emph{locally decodable} at a
 node $u$ if the cost terms $$\theta_u(i) + \sum_{v:uv \in
   E}\eta_{uv}(i)$$ have a unique minimizing label $i$.  This dual $P$
 has the following useful properties for studying persistency of the
 LP \eqref{eq:earthmoverlp}:
\begin{property}[Strong Duality]
  \label{prop:strongdual}
  A solution $\eta^*$ to the maximization \eqref{eq:pairwisedual} has
  $P(\eta^*) = Q(x)$, where $x$ is a solution to the pairwise LP
  \eqref{eq:earthmoverlp}. Here $Q(x)$ is the objective function of
  \eqref{eq:earthmoverlp}; this is identical to $Q$ from
  \eqref{eq:umlobj} when $x$ is integral.
\end{property}
\begin{property}[Complementary Slackness, \citet{SonJaaOpt} Theorem 1.2]
  \label{prop:compslack}
  If $x$ is a primal solution to the pairwise LP \eqref{eq:earthmoverlp}
  and there exists a dual solution $\eta^*$ that is locally decodable at
  node $u$ to label $i$, then $x_u(i) = 1$. That is, if the dual
  solution $\eta^*$ is locally decodable at node $u$, the primal
  solution $x$ is not fractional at node $u$.
\end{property}

\begin{property}[Strict Complementary Slackness, \citet{SonJaaOpt} Theorem 1.3]
  \label{prop:strictcompslack}
  If the LP \eqref{eq:earthmoverlp} has a unique, integral solution
  $x$, there exists a dual solution $\eta^*$ to \eqref{eq:pairwisedual}
  that is locally decodable to $x$.
\end{property}
In particular, Property \ref{prop:compslack} says that to prove the
primal LP is persistent at a vertex $u$, we need only exhibit a dual
solution $\eta^*$ to \eqref{eq:pairwisedual} that is locally decodable
at $u$ to $g(u)$, where $g$ is an integer MAP solution. Properties
\ref{prop:strongdual} and \ref{prop:strictcompslack} will be useful
for proving results about a different Lagrangian dual that relaxes
fewer constraints, which we study now.

Given a partition $V = (S_1, \ldots S_B)$ (henceforth a ``block
decomposition''), we may consider relaxing fewer consistency
constraints than \eqref{eq:pairwisedual} does, to form a \emph{block
  dual}.
\begin{equation}
  \begin{split}
    \label{eq:blockdual_gen}
    &\max_{\delta} B(\delta) :=\\
  &\max_{\delta}\sum_{b} \min_{x^b}\left(\sum_{u \in S_b}\sum_{i \in L}\left(\theta_u(i) + \smashoperator{\sum_{v:(u,v)\in E_{\partial}}}\delta_{uv}(i)\right)x_u^b(i)\right.\\
  &\left.+\sum_{uv \in E_{S_b}}\sum_{i,j}\theta_{uv}(i,j)x^b_{uv}(i,j)\right)\\
  &+ \sum_{uv \in E_{\partial}}\min_{i,j}(\theta_{uv}(i,j) - \delta_{uv}(i) - \delta_{vu}(j))
  \end{split}
\end{equation}
subject to the following constraints for all $b \in \{1,\ldots,B\}$:
\begin{equation}
  \label{eq:blockconstr_gen}
  \def\arraystretch{1.5}
  \begin{array}{ll}
    \displaystyle\sum_i x^b_{u}(i) = 1, \;
    & \forall u\in S_b,\; \forall i \in L\\
    x^b_u(i) \ge 0,\;
    & \forall u\in S_b\; \forall i \in L.\\
    \sum_{j} x^b_{uv}(i,j) = x^b_{u}(i)
    & \forall (u,v) \in E_{S_b},\; \forall i\in L,\\
    \sum_{i} x^b_{uv}(i,j) = x^b_{v}(j)
    & \forall (u,v) \in E_{S_b},\; \forall j\in L.\\
  \end{array}
\end{equation}
This is simply a more general version of the dual
\eqref{eq:blockdual}, written for an arbitrary partition $V = (S_1,
\ldots, S_B)$. Here the consistency constraints are only relaxed for
edges in $E_{\partial}$ (boundary edges, which go from one block to
another). The dual subproblems in the first term of
\eqref{eq:blockdual_gen} are LPs on each block, where the node costs
of boundary vertices are modified by the block dual variables
$\delta$. For any $\delta$, we can define the \emph{reparametrized}
costs $\theta^{\delta}_u$ as
\begin{equation*}
  \theta^{\delta}_u(i) = \begin{cases}
    \theta_u(i) + \sum_{v : (u,v) \in E_{\partial}}\delta_{uv}(i) & \exists (u,v) \in E_{\partial}\\
    \theta_u(i) & \text{otherwise}
  \end{cases},
\end{equation*}
so the block dual objective can also be written as
\begin{equation*}
  \begin{split}
    B(\delta) = \sum_{b} \min_{x^b}\left(\sum_{u \in S_b}
    \sum_{i \in L}\theta^{\delta}_u(i)x_u^b(i) +\right.\\
    \left.\sum_{uv \in E_{S_b}}\sum_{i,j}\theta_{uv}(i,j)x^b_{uv}(i,j)\right) \\
    + \sum_{uv \in E_{\partial}}\min_{i,j}(\theta_{uv}(i,j) - \delta_{uv}(i) - \delta_{vu}(j))
  \end{split}
\end{equation*}
When there is only one block, equal to $V$, the block dual is
equivalent to the primal LP \eqref{eq:earthmoverlp}. When every vertex
is in its own block, the block dual is equivalent to the pairwise dual
\eqref{eq:pairwisedual}.

The following propositions allow us to convert between solutions of
the pairwise dual \eqref{eq:pairwisedual} and the generalized block dual
\eqref{eq:blockdual_gen}.
\begin{proposition}
  \label{lem:pair2block}
  Let $\eta^*$ be a solution to \eqref{eq:pairwisedual}. Let
  $\delta^*$ be the restriction of $\eta^*$ to the domain of $B$; that
  is, $\delta^*_{uv}$ is defined only for pairs $uv$, $vu$ such that
  $(u,v) \in E_{\partial}$ or $(v,u) \in E_{\partial}$:
  \begin{equation*}
    \delta^*_{uv}(i) = \eta^*_{uv}(i) \quad (u,v) \in E_{\partial} \text{ or } (v,u) \in E_{\partial}
  \end{equation*}
  Then $\delta^*$ is a a solution to \eqref{eq:blockdual_gen}.
\end{proposition}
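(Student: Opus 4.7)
The plan is to prove $\delta^*$ attains the optimum of the block dual \eqref{eq:blockdual_gen} by sandwiching $B(\delta^*)$ between $P(\eta^*)$ and the primal LP optimum, then using strong duality for the pairwise dual (Property \ref{prop:strongdual}) to force the two values to coincide.

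First I would note that $B$ is the Lagrangian dual of the LP \eqref{eq:earthmoverlp} in which only the boundary consistency constraints have been dualized, so weak duality gives $B(\delta) \le Q(x)$ for every $\delta$, where $x$ is any solution to \eqref{eq:earthmoverlp}. Combined with Property \ref{prop:strongdual}, this immediately yields the upper bound $B(\delta^*) \le Q(x) = P(\eta^*)$.

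The substantive step is the matching lower bound $B(\delta^*) \ge P(\eta^*)$. Fix a block $S_b$ and consider the inner minimization over $x^b$ appearing in \eqref{eq:blockdual_gen}. I would further dualize its interior consistency constraints using the entries of $\eta^*$ indexed by interior edges of $S_b$, which is a feasible choice of Lagrange multipliers for this smaller LP. Weak duality for this inner LP yields
\[
\begin{aligned}
\min_{x^b}(\cdots) &\ge \sum_{u \in S_b}\min_i\Bigl(\theta^{\delta^*}_u(i) + \sum_{v:(u,v)\in E_{S_b}}\eta^*_{uv}(i)\Bigr) \\
&\quad + \sum_{uv \in E_{S_b}}\min_{i,j}\bigl(\theta_{uv}(i,j) - \eta^*_{uv}(i) - \eta^*_{vu}(j)\bigr).
\end{aligned}
\]
The crucial identity is that $\delta^*_{uv}(i) = \eta^*_{uv}(i)$ on boundary edges by construction, so for every $u \in S_b$,
\[
\theta^{\delta^*}_u(i) + \sum_{v:(u,v)\in E_{S_b}}\eta^*_{uv}(i) \;=\; \theta_u(i) + \sum_{v}\eta^*_{uv}(i),
\]
since the interior and boundary neighbors of $u$ partition its full neighborhood.

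Finally I would sum these per-block lower bounds over $b$ and add the boundary term $\sum_{uv \in E_\partial}\min_{i,j}(\theta_{uv}(i,j) - \delta^*_{uv}(i) - \delta^*_{vu}(j))$ that already appears in $B(\delta^*)$. The node contributions reassemble into $\sum_{u \in V}\min_i(\theta_u(i) + \sum_v \eta^*_{uv}(i))$, while the interior-edge sums together with the boundary term reassemble into $\sum_{uv \in E}\min_{i,j}(\theta_{uv}(i,j) - \eta^*_{uv}(i) - \eta^*_{vu}(j))$. The total is exactly $P(\eta^*)$, giving $B(\delta^*) \ge P(\eta^*)$ and hence equality. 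The main obstacle is the accounting in this last step: one has to verify that the inner-dual lower bounds really do reassemble into $P(\eta^*)$ without double-counting, using the partition structure of $V$, the fact that $E = E_\partial \sqcup \bigsqcup_b E_{S_b}$, and the definition of $\theta^{\delta^*}$ consistently.
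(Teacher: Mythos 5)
Your proposal is correct and follows essentially the same route as the paper's proof: both sandwich $B(\delta^*)$ between $P(\eta^*)$ and $Q(x)$ using weak duality of the block dual together with strong duality of the pairwise dual (Property \ref{prop:strongdual}), with all the substance residing in the inequality $B(\delta^*) \ge P(\eta^*)$. The only difference is presentational --- you obtain that inequality by applying weak Lagrangian duality to each block's inner LP with multipliers $\eta^*\vert_{E_{S_b}}$ and reassembling via the partition $E = E_{\partial} \sqcup \bigsqcup_b E_{S_b}$, whereas the paper builds up from $P(\eta^*)$ by iteratively merging nodes into blocks using convexity of $\min$ and the fact that adding constraints to a minimization cannot decrease its value; these are two phrasings of the same underlying fact.
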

This proposition gives a simple method for converting a solution to
pairwise dual $P$ to a solution to the block dual $B$: simply restrict
it to the domain of $B$. As we explain in Appendix \ref{appx:exp},
this allows us to avoid ever solving the block dual directly; we
simply solve the pairwise dual once, and can then easily form a block
dual solution for any set of blocks.
\begin{proof}
  It is clear that $\delta^*$ defined in this way is dual-feasible
  (there are no constraints on the $\delta$'s). We show that
  $B(\delta^*) \ge P(\eta^*)$. Let $x$ be a primal LP
  solution. Because $B(\delta) \le Q(x)$ for any dual-feasible
  $\delta$ (this is easy to verify), and $P(\eta^*) = Q(x)$ (Property
  \ref{prop:strongdual}), this implies $B(\delta^*) =
  Q(x)$. $\delta^*$ must then be a solution for the block dual $B$.
  Note that this proof also implies strong duality for the block dual.

  To see that $B(\delta^*) \ge P(\eta^*)$, one could observe
  intuitively that $B$ is strictly more constrained than $P$ unless
  every vertex is its own block; since the subproblems are all
  minimization problems, the optimal objective of $B$ will be
  higher. More formally, consider two adjacent nodes $a$ and $b$ in
  the pairwise dual $P$. The terms corresponding to $a$ in $b$ in $P$
  can be written as:
  \begin{align*}
    &\min_{x_a}\sum_i\left(\theta_a(i) + \eta_{ab}^*(i) + \sum_{c : N(a) \setminus \{b\}}\eta^*_{ac}(i)\right)x_a(i) \\
    &+ \min_{x_b}\sum_i\left(\theta_b(i) + \eta_{ba}^*(i) + \sum_{c : N(b) \setminus \{a\}}\eta^*_{bc}(i)\right)x_b(i)\\
    &+ \min_{x_{ab}}\sum_{i,j}\left(\theta_{ab}(i,j) - \eta_{ab}^*(i) -\eta^*_{ba}(j)\right)x_{ab}(i,j),
  \end{align*}
  where $N(u)$ is the set of vertices adjacent to $u$. The $x$ terms
  written here do not appear in \eqref{eq:pairwisedual} because the
  minimum choice at a single vertex $u$ can clearly be chosen by
  $x_u(i) = 1$ for a label $i$ that minimizes the reparametrized
  potential, but we have left them in for convenience (under the
  constraint that $\sum_i x_u(i) = 1$). By the convexity of $\min$,
  the value of the objective above is at most
  \begin{align*}
    &\min_{x_a, x_b, x_{ab}}\sum_i\left(\theta_a(i) + \eta_{ab}^*(i) + \sum_{c : N(a) \setminus \{b\}}\eta^*_{ac}(i)\right)x_a(i)\\
    &+ \sum_i\left(\theta_b(i) + \eta_{ba}^*(i) + \sum_{c : N(b) \setminus \{a\}}\eta^*_{bc}(i)\right)x_b(i)\\
    &+ \sum_{i,j}\left(\theta_{ab}(i,j) - \eta_{ab}^*(i) -\eta^*_{ba}(j)\right)x_{ab}(i,j),
  \end{align*}
  Adding a new constraint to this minimization problem can only
  increase the objective value, so the value of the objective above is
  at most the value of:
  \begin{align*}
    &\min_{x_a, x_b, x_{ab}}\sum_i\left(\theta_a(i) + \sum_{c : N(a) \setminus \{b\}}\eta^*_{ac}(i)\right)x_a(i)\\
    &+ \sum_i\left(\theta_b(i) + \sum_{c : N(b) \setminus \{a\}}\eta^*_{bc}(i)\right)x_b(i)\\
    &+ \sum_{i,j}\theta_{ab}(i,j)x_{ab}(i,j)
  \end{align*}
  subject to the constraints $\sum_j x_{ab}(i,j) = x_a(i)$ for all $i$
  and $\sum_i x_{ab}(i,j) = x_b(j)$ for all $j$. Now the vertices $a$
  and $b$ have been combined into a block. One can continue in this
  way, enforcing consistency constraints within blocks, until arriving
  at:
  \begin{align*}
      &\sum_{u \in V}\min_i (\theta_u(i) + \sum_{v}\eta^*_{uv}(i)) + \sum_{uv \in E}\min_{i,j}(\theta_{uv}(i,j)\\
      &\;\;- \eta^*_{uv}(i) - \eta^*_{vu}(j)) \le\\
      &\sum_{b} \min_{x^b}\left(\sum_{u \in S_b} \sum_{i \in L}\left(\theta_u(i) + \sum_{v}\eta^*_{uv}(i)\right)x_u^b(i)\right.\\
      &\left.\;\;+ \sum_{uv \in E_b}\sum_{i,j}\theta_{uv}(i,j)x^b_{uv}(i,j)\right)\\
      &\;\;+ \sum_{uv \in E_{\partial}}\min_{i,j}(\theta_{uv}(i,j) - \eta^*_{uv}(i) - \eta^*_{vu}(j)),
  \end{align*}
    where the minimizations over $x^b$ on the right-hand-side are
    subject to the constraints \eqref{eq:blockconstr_gen}. The left-hand
    side is $P(\eta^*)$. The expression on the right hand side is
    precisely the objective of $B(\delta^*)$, since we defined
    $\delta^*$ as the restriction of $\eta^*$ to edges in
    $E_{\partial}$. This completes the proof.
\end{proof}

\begin{corollary}[Strong duality for block dual]
  \label{cor:blockstrong}
  If $x$ is a primal solution and $\delta^*$ is a solution to the
  block dual, $B(\delta^*) = Q(x)$.
\end{corollary}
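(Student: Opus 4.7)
The plan is to deduce this corollary from Proposition \ref{lem:pair2block} by combining it with weak duality for the block dual. The key observation is that the proof of that proposition already pins down the optimal value of $B$ as $Q(x)$, so any other optimum must attain the same value.

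First I would verify weak duality: for any primal-feasible $x$ (satisfying the constraints of \eqref{eq:earthmoverlp}) and any dual-feasible $\delta$, $B(\delta) \le Q(x)$. This is a routine Lagrangian calculation. Substituting the restriction of $x$ to each block $S_b$ into the inner minimization of $B(\delta)$ only increases its value, and the $\delta$-reparametrizations on boundary nodes cancel against the explicit boundary-edge terms of $B(\delta)$ because the marginalization constraints $\sum_j x_{uv}(i,j) = x_u(i)$ and $\sum_i x_{uv}(i,j) = x_v(j)$ also hold in the primal on boundary edges. Collecting terms then yields $B(\delta) \le Q(x)$, as claimed.

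Next, Proposition \ref{lem:pair2block} exhibits a specific feasible block-dual point $\delta^0$, obtained by restricting a pairwise-dual optimum $\eta^*$ to $E_{\partial}$, satisfying $B(\delta^0) \ge P(\eta^*)$. Combined with $P(\eta^*) = Q(x)$ from Property \ref{prop:strongdual} and the weak duality bound $B(\delta^0) \le Q(x)$, this forces $B(\delta^0) = Q(x)$, so the optimal value of the block dual is exactly $Q(x)$. Since any $\delta^*$ described in the corollary statement is by definition an optimum of the block dual, it attains this value, giving $B(\delta^*) = Q(x)$. The only non-routine piece of the argument is verifying weak duality cleanly; everything else is essentially bookkeeping on top of results already established earlier in the appendix.
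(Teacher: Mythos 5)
Your argument is correct and matches the paper's: the paper establishes this corollary inside the proof of Proposition \ref{lem:pair2block}, by combining the weak duality bound $B(\delta) \le Q(x)$ (which it states as "easy to verify") with the constructed feasible point satisfying $B(\delta^*) \ge P(\eta^*) = Q(x)$, exactly as you do. Your sketch of the weak-duality cancellation via the marginalization constraints fills in the one step the paper leaves to the reader, and it is sound.
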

So we are able to easily convert between a pairwise dual solution and
a solution to the block dual. This will prove convenient for two
reasons: there are many efficient pairwise dual solvers, so we can
quickly find $\eta^*$. Additionally, we can solve the pairwise dual
once and convert the solution $\eta^*$ into solutions $\delta^*$ to
the block dual for \emph{any} block decomposition without having to
recompute a solution. As we mentioned above, this will allow us to
quickly test different block decompositions.

The following proposition allows us to convert a solution to the block
dual to a pairwise dual solution.
\begin{proposition}
  \label{lem:block2pair}
  Let $\delta^*$ be a solution to the block dual \eqref{eq:blockdual_gen}.
  Recall that each subproblem of the block dual is an LP of the same
  form as \eqref{eq:earthmoverlp}. So we can consider the pairwise
  dual $P$ \emph{defined on this subproblem}. For block $b$, let
  $\eta^b$ be a solution to the pairwise dual defined on that block's
  (reparametrized) subproblem. That is,
  \begin{equation*}
    \begin{split}
    \eta^b = &\max_{\eta} \sum_{u \in S_b}\min_i \left(\theta_u(i)
    + \smashoperator{\sum_{v: uv \in E_{S_b}}}\eta_{uv}(i)
    + \smashoperator{\sum_{v: uv \in E_{\partial}}}\delta^*_{uv}(i)\right)\\
    &+ \sum_{uv \in E_{S_b}}\min_{i,j}(\theta_{uv}(i,j)
    - \eta_{uv}(i) - \eta_{vu}(j))
    \end{split}
  \end{equation*}
  Then the point $\eta^*$ defined as
  \begin{equation*}
    \eta^*_{uv}(i) = \begin{cases}
      \eta^b_{uv}(i) & (u,v) \in E_{S_b} \text{ or } (v,u) \in E_{S_b}\\
      \delta^*_{uv}(i) & (u,v) \in E_{\partial} \text{ or } (v,u) \in E_{\partial}
    \end{cases}
  \end{equation*}
  is a solution to \eqref{eq:pairwisedual}.
\end{proposition}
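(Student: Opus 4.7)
The plan is to show that $\eta^*$ achieves the optimal value of the pairwise dual $P$, which by strong duality (Property~\ref{prop:strongdual}) equals the optimal primal value of \eqref{eq:earthmoverlp}. Since $P(\eta)$ can never exceed this optimal value (weak duality), any dual point that attains it is a maximizer. So the entire proof reduces to verifying the scalar identity $P(\eta^*) = B(\delta^*)$, because the right-hand side already equals the primal optimum by Corollary~\ref{cor:blockstrong}.

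First, I would expand $B(\delta^*)$ using the strong duality that each block subproblem individually enjoys. Each block subproblem in \eqref{eq:blockdual_gen} is itself an instance of the primal LP \eqref{eq:earthmoverlp} on the graph $(S_b, E_{S_b})$ with reparametrized node costs $\theta_u(i) + \sum_{v:(u,v)\in E_{\partial}}\delta^*_{uv}(i)$, so applying Property~\ref{prop:strongdual} within each block yields
\[B(\delta^*) = \sum_b P_b(\eta^b) + \sum_{uv \in E_{\partial}}\min_{i,j}\bigl(\theta_{uv}(i,j) - \delta^*_{uv}(i) - \delta^*_{vu}(j)\bigr),\]
where $P_b(\eta^b)$ is exactly the pairwise-dual expression displayed in the statement of the proposition.

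Next, I would expand $P(\eta^*)$ directly by substituting the piecewise definition of $\eta^*$. Each $u \in S_b$ is either interior to $S_b$ (all incident edges lie in $E_{S_b}$) or boundary (some incident edges lie in $E_{\partial}$); in either case, the sum $\sum_v \eta^*_{uv}(i)$ splits cleanly as $\sum_{v:(u,v)\in E_{S_b}}\eta^b_{uv}(i) + \sum_{v:(u,v)\in E_{\partial}}\delta^*_{uv}(i)$, which is precisely the reparametrized node-cost term appearing inside $P_b(\eta^b)$. Similarly, the edge sum $\sum_{uv \in E}$ partitions as $\bigsqcup_b E_{S_b} \sqcup E_{\partial}$, and on $E_{S_b}$ the coefficients involve only $\eta^b$, while on $E_{\partial}$ they involve only $\delta^*$. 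Collecting terms block-by-block gives exactly the same expression that we derived for $B(\delta^*)$, proving $P(\eta^*) = B(\delta^*)$.

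The main obstacle is keeping the bookkeeping straight: one must be careful that boundary vertices contribute the correct combination of $\eta^b$ (for incident interior edges) and $\delta^*$ (for incident boundary edges), and that the boundary edge terms appear once in the decomposition rather than twice. Everything else is essentially a rewriting argument. Once the identity $P(\eta^*) = B(\delta^*)$ is established, weak duality plus $B(\delta^*) = Q(x^*)$ (Corollary~\ref{cor:blockstrong}) and $P(\eta^*) \le Q(x^*)$ forces equality, so $\eta^*$ maximizes $P$ and is a solution to \eqref{eq:pairwisedual}.
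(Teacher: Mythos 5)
Your proposal is correct and follows essentially the same route as the paper, whose proof is the one-line remark that the claim is immediate from strong duality of the pairwise dual (Property~\ref{prop:strongdual}) and of the block dual (Corollary~\ref{cor:blockstrong}); you simply make explicit the bookkeeping that the paper leaves implicit, namely that $P(\eta^*)=\sum_b P_b(\eta^b)+\sum_{uv\in E_{\partial}}\min_{i,j}(\theta_{uv}(i,j)-\delta^*_{uv}(i)-\delta^*_{vu}(j))=B(\delta^*)=Q(x)$ and then conclude by weak duality. No gaps.
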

Given a solution $\delta^*$ to the block dual, we use Proposition
\ref{lem:block2pair} to extend it to a solution to the pairwise dual
defined on the full instance; combining $\delta^*$ with pairwise dual
solutions on the subproblems induced by $\delta^*$ and the block
decomposition gives an optimal $\eta^*$.
\begin{proof}
  This is immediate from strong duality of the pairwise dual and the
  block dual (Property \ref{prop:strongdual} and Corollary
  \ref{cor:blockstrong}, respectively).
\end{proof}

With this proposition, we are finally ready to prove Lemma \ref{lem:blocktight}.
\begin{proof}[Proof of Lemma \ref{lem:blocktight}]
    We are given a Potts instance $(G, \theta, w, L)$. Let $\delta^*$
    be a solution to \eqref{eq:blockdual_gen} with $S_1 = S$ and $S_2
    = V \setminus S$. We know the
    sub-instance $$((S,E_S),\theta^{\delta^*}\vert_S,w\vert_{E_S},L)$$
    is $(2,1)$-stable. Let $g_S$ be the exact solution to the instance
    $((S,E_S),\theta^{\delta^*}\vert_S,w\vert_{E_S},L)$. If $g$ is the
    exact solution for $(G,\theta,w,L)$, $g_S$ may or may not be the
    same as $g\vert_S$. For this Lemma, they need not be equal, and we
    just work with $g_S$. Because of the $(2,1)$-stability, Theorem
    \ref{thm:strongthm} implies that $g_S$ is the unique solution to
    the following LP:
\begin{equation*}
  \def\arraystretch{1.2}
  \begin{array}{cll}
    \displaystyle\underset{x^S}{\text{min}} & \multicolumn{2}{l}{\displaystyle\sum_{u\in V}\displaystyle\sum_{i \in L}\theta^{\delta^*}_u(i)x^S_u(i) + \displaystyle\sum_{uv\in E}\sum_{i,j}\theta_{uv}(i,j)x^S_{uv}(i,j)}\\
    \text{s.t.}\;
    & \displaystyle\sum_i x^S_{u}(i) = 1, \;
    & \forall u\in V,\ \forall i \in L\\
    & \sum_{j} x^S_{uv}(i,j) = x^S_{u}(i)
    & \forall (u,v) \in E,\; \forall i\in L,\\
    & \sum_{i} x^S_{uv}(i,j) = x^S_{v}(j)
    & \forall (u,v) \in E,\; \forall j\in L,\\
    & x^S_u(i) \ge 0,\;
    &\forall u \in V,\ i \in L.\\
    & x^S_{uv}(i, j) \ge 0,\;
    &\forall (u,v) \in E,\ i,\ j \in L.
  \end{array}
\end{equation*}
This LP is simply the pairwise LP \eqref{eq:earthmoverlp} defined on
$((S,E_S),\theta^{\delta^*}\vert_S,w\vert_{E_S},L)$. Strict complementary
slackness (Property \ref{prop:strictcompslack}) implies that the
pairwise dual problem defined on
$((S,E_S),\theta^{\delta^*}\vert_S,w\vert_{E_S},L)$ has a solution $\eta^S$ that
is locally decodable to $g_S$. That is, there is some $\eta_S$ with
\begin{equation*}
  \begin{split}
    \eta^S = &\max_{\eta} \sum_{u \in S}\min_i \left(\theta_u(i)
    + \smashoperator{\sum_{v: uv \in E_{S}}}\eta_{uv}(i)
    + \smashoperator{\sum_{v: uv \in E_{\partial}}}\delta^*_{uv}(i)\right)\\
    &+ \sum_{uv \in E_{S}}\min_{i,j}(\theta_{uv}(i,j)
    - \eta_{uv}(i) - \eta_{vu}(j))
  \end{split}
\end{equation*}
and for all $u \in S$, $$\argmin_i \left(\theta_u(i) +
\smashoperator{\sum_{v: uv \in E_{S}}}\eta^S_{uv}(i) +
\smashoperator{\sum_{v: uv \in E_{\partial}}}\delta^*_{uv}(i)\right) =
\{g_S(u)\}.$$ In other words, $g_S(u)$ is the unique minimizer of the
modified node costs at $u\in S$.  By Proposition \ref{lem:pair2block},
we can extend $\eta^S$ and $\delta^*$ to a solution $\eta^*$ to the
pairwise dual \eqref{eq:pairwisedual} defined on $(G, \theta, w,
L)$. This extended solution is locally decodable to $g_S$ on $S$ by
construction. If $x$ is a solution to the primal LP
\eqref{eq:earthmoverlp} defined on $(G,\theta,w,L)$, complementary
slackness (Property \ref{prop:compslack}) implies that $x_u(g_S(u)) =
1$ for all $u \in S$. That is, the LP solution $x$ is equal to $g_S$
on $S$.
\end{proof}

Nothing special was used about the block decomposition $(S,V\setminus
S)$, and indeed Lemma \ref{lem:blocktight} also holds for an arbitrary
decomposition $(S_1, \ldots S_B)$; if the instance restricted to a
block $S_b$ is $(2,1)$-stable after its node costs are perturbed by a
solution $\delta^*$ to the block dual \eqref{eq:blockdual_gen}, the
primal LP is equal on $S_b$ to the exact solution of that restricted
instance.

It is clear from Lemma \ref{lem:blocktight} that if the solutions
$g_S$ to the restricted instances are equal to $g\vert_S$ (the exact
solution to the full problem, restricted to $S$), the primal LP $x$ is
persistent on $S$ (this is formalized in Corollary
\ref{corr:persist}). This is why Theorem \ref{thm:blocktight} requires
that the restricted instance is stable with solution $g\vert_S$.
\begin{proof}[Proof of Theorem \ref{thm:blocktight}]
  Note that a block dual solution $\delta^*$ is a valid
  $\epsilon^*$-bounded perturbation of $\theta$ by the choice of
  $\epsilon^*$ and Definition \ref{def:epspert}. Because we have
  assumed in the statement of the theorem that the solution $g_S$ to
  the restricted instance is equal to the restricted solution
  $g\vert_S$, the result follows directly from Lemma
  \ref{lem:blocktight}.
\end{proof}

\subsection{Do we need dual variables?}
\label{appx:simple}
A simpler definition for block stability would be that a block $S$ is
stable if the instance $$((S, E_S), \theta\vert_S, w\vert_{E_S}, L)$$
is $(2,1)$-stable. Unfortunately, this is not enough to guarantee
persistency. Consider the counterexample in Figure
\ref{fig:blockcounter}.

\begin{figure}[t]
  \centering
  \begin{subfigure}{.5\linewidth}
    \centering
    \tikzstyle{vertex}=[circle, draw=black, very thick, minimum size=5mm]
    \tikzstyle{edge} = [draw=black, line width=1]
    \tikzstyle{weight} = [font=\normalsize]
    \begin{tikzpicture}[scale=2,auto,swap]
      \foreach \pos /\name in {{(0,0)}/u,{(1,0)}/w,{(0.5,0.75)}/v}
      \node[vertex](\name) at \pos{$\name$};
      \foreach \source /\dest /\weight in {u/w/1}
      \path[edge] (\source) -- node[weight] {$\weight$} (\dest);
      \foreach \source /\dest /\weight/\pos in {u/v/1/{above left}, v/w/1/{above right}}
      \path[edge] (\source) -- node[weight, \pos] {$\weight$} (\dest);
    \end{tikzpicture}
  \end{subfigure}%
  \begin{subfigure}{.5\linewidth}
    \centering
    \begin{tabular}{|l|ccc|}
      \hline
      \multicolumn{1}{|c|}{\textbf{Node}} & \multicolumn{3}{|c|}{\textbf{Costs}} \\
      \hline
      u & $\infty$       & 0        & $\varepsilon$ \\
      \hline
      v & 0              & $\infty$ & $\varepsilon$ \\
      \hline
      w & $\varepsilon$  & 0        & $\infty$        \\
      \hline
    \end{tabular}
  \end{subfigure}
  \caption{Instance where each node belongs to a block that is
    $(\infty, \infty)$-stable when the node costs are \emph{not}
    perturbed. The LP solution is fractional everywhere.}
  \label{fig:blockcounter}
\end{figure}
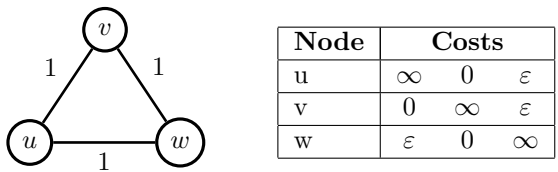
The optimal integer solution $g$ labels $u$ and $w$ with label 2, and
$v$ with label 1, for a total objective of 2. The optimal LP solutions
assigns weight 0.5 to each label with non-infinite cost, for a total
objective of $\frac{3}{2}(1+\varepsilon) < 2$ for any $\varepsilon <
\frac{1}{3}$. Define the block decomposition $S_1 = \{u\}$, $S_2 =
\{v\}$, $S_3 = \{w\}$. Note that each block has a unique optimal
solution given by the minimum-cost label, and that these labels match
the ones assigned in the combined optimal solution $g$. Every vertex
in this instance therefore belongs to an $(\infty, \infty)$-stable
block, according to the simpler definition, but the LP is not
persistent anywhere. It is relatively straightforward to check that
this instance does not satisfy Definition \ref{def:blockstab} or the
conditions of Lemma \ref{lem:blocktight}.

\subsection{Stable block size}
\label{appx:blocksize}
Assume the pairwise dual solution $\eta^*$ is locally decodable on
vertex $u$ to the label $g(u)$, where $g$ is the exact solution. Then
the reparametrized node costs $\theta'_u(i) = \theta_u(i) + \sum_{v\in
  N(u)}\eta^*_{uv}(i)$ have a unique minimizing label $i$. Now
consider solving the block dual \eqref{eq:blockdual_gen} when $S_u =
\{u\}$ is a block with just one vertex, $u$. Around block $S_u$,
$\delta^*_{uv}(i) = \eta^*_{uv}(i)$ is a solution to the block dual
(see Proposition \ref{lem:pair2block}). But this means that $S_u$ is a
$(\infty, \infty)$-stable block with the modified node costs (there
are no edges to perturb, and the node costs have a unique
minimizer). In this way, it is trivial to give a stable block
decomposition any time the LP \eqref{eq:earthmoverlp} is persistent on a
node $u$---simply add $u$ to its own block. However, it is not
possible \emph{a priori} to find stable blocks of size greater than
one, and we show in Section \ref{sec:experiments} that many such
blocks exist in practice. These practical instances therefore exhibit
structure that is more special than persistency: large stable blocks
are not to be expected from persistency alone, and their existence
implies persistency.

\subsection{Combining stability with other structure}
\label{appx:combine}
\begin{figure}[t]
  \centering
  \tikzstyle{vertex}=[circle, draw=black, very thick, minimum size=5mm]
  \tikzstyle{edge} = [draw=black, line width=1]
  \tikzstyle{weight} = [font=\normalsize]
  \begin{tikzpicture}[scale=2,auto,swap]
    \foreach \pos /\name in {{(0,0)}/v,{(1,0)}/w,{(0.5,0.75)}/u, {(2,0.75)}/x, {(2,0)}/y, {(3,0)}/z}
    \node[vertex](\name) at \pos{$\name$};
    \foreach \source /\dest /\weight in {v/w/2, w/y/\epsilon, x/y/2, y/z/2-\gamma}
    \path[edge] (\source) -- node[weight] {$\weight$} (\dest);
    \foreach \source /\dest /\weight/\pos in {u/v/2/{above left}, u/w/2/{above right}, u/x/\epsilon/{above}}
    \path[edge] (\source) -- node[weight, \pos] {$\weight$} (\dest);
  \end{tikzpicture}
  \caption{Potts model instance with both stable and tree structure.}
  \label{fig:combined}
\end{figure}
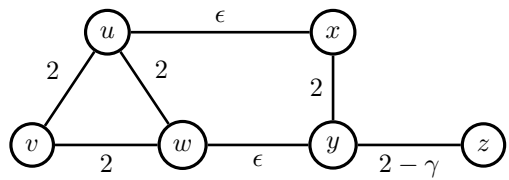
\begin{figure}[t]
  \centering
  \begin{subfigure}[b]{0.5\linewidth}
    \begin{tabular}{|l|ccc|}
      \hline
      \multicolumn{1}{|c|}{\textbf{Node}} & \multicolumn{3}{|c|}{\textbf{Costs}} \\
      \multicolumn{1}{|c|}{} & 1 & 2 & 3\\      
      \hline
      u & 0       & 0        & 2 \\
      \hline
      v & 0              & $\infty$ & $\infty$ \\
      \hline
      w & 0  & 0        & 2        \\
      \hline
      x & 2  & 0        & 2        \\
      \hline
      y & 2  & 0        & 2        \\
      \hline
      z & 0  & 1        & 1        \\
      \hline
    \end{tabular}
    \subcaption{Original node costs $\theta$}
  \end{subfigure}%
  \begin{subfigure}[b]{0.5\linewidth}
    \centering
    \begin{tabular}{|l|c|}
      \hline
      \multicolumn{1}{|c|}{\textbf{Node}} & \multicolumn{1}{|c|}{\textbf{Opt. Label}} \\
      \hline
      u & 1\\
      \hline
      v & 1 \\
      \hline
      w & 1\\
      \hline
      x & 2 \\
      \hline
      y & 2 \\
      \hline
      z & 2 \\
      \hline
    \end{tabular}
    \subcaption{Exact solution $g$}
  \end{subfigure}
  \caption{Details for the instance in Figure \ref{fig:combined}. The
    strictly positive values $\epsilon$ and $\gamma$ are both
    taken sufficiently small.}
  \label{fig:combinedetails}
\end{figure}

Consider the instance in Figure \ref{fig:combined}. The tables in
Figure \ref{fig:combinedetails} give the original node costs $\theta$
and the exact solution $g$ for this instance. The objective of $g$ is
$1 + 2\epsilon$. The pairwise LP \eqref{eq:earthmoverlp} is persistent
on this instance. How can we explain that? The instance is not
$(2,1)$-stable: when the weight between $y$ and $z$ is multiplied by
$\frac{1}{2}$, the optimal label for $z$ switches from 2 to
1. However, if we take $\epsilon$ to be very small, the blocks $S =
\{u,v,w\}$ and $T = \{x,y,z\}$ seem loosely coupled, and the strong
node costs and connections in $S$ suggest it might have some stable
structure. Unfortunately, the block $T$ is not stable for the same
reason that the overall instance is not stable. However, this block is
a tree!

It is fairly straightforward to verify that $\delta^*$ given by
\begin{equation*}
  \begin{aligned}
    \delta^*_{ux} = (\epsilon, 0, 0) &\;\;\;\; \delta^*_{xu} = (-\epsilon, 0, 0)\\
    \delta^*_{wy} = (\epsilon, 0, 0) &\;\;\;\; \delta^*_{yw} = (-\epsilon, 0, 0)
  \end{aligned}
\end{equation*}
is a solution to the block dual with blocks $\{S,T\}$. Indeed, Figure
\ref{fig:combinedetails2} shows the node costs $\theta^{\delta^*}$
updated by this solution. If we solve the LP on each modified block,
ignoring the edges between $S$ and $T$, we get an objective of
$2\epsilon$ for $S$ and an objective of 1 for $T$. Because this
matches the objective of the original exact solution $g$, we know in
this case that $\delta^*$ must be optimal for the block dual.
\begin{figure}[t]
  \centering
  \begin{tabular}{|l|ccc|}
      \hline
      \multicolumn{1}{|c|}{\textbf{Node}} & \multicolumn{3}{|c|}{\textbf{Costs}} \\
      \multicolumn{1}{|c|}{} & 1 & 2 & 3\\
      \hline
      u & $\epsilon$       & 0        & 1 \\
      \hline
      v & 0              & $\infty$ & $\infty$ \\
      \hline
      w & $\epsilon$  & 0        & 1        \\
      \hline
      x & 2$-\epsilon$  & 0        & 2        \\
      \hline
      y & 2$-\epsilon$  & 0        & 2        \\
      \hline
      z & 0  & 1        & 1        \\
      \hline
    \end{tabular}
    \caption{Updated node costs $\theta^{\delta^*}$}
    \label{fig:combinedetails2}
\end{figure}
It can then be shown that the modified block $$((S,E_S),
\theta^{\delta^*}\vert_S, w\vert_{E_S}, L)$$ is $(2,1)$-stable: when
all the weights of edges in $E_S$ become 1 instead of 2, the solution
is still to label $u$ and $w$ with label 1 for sufficiently small
$\phi$ and $\epsilon$.  Similarly, the block $$((T,E_T),
\theta^{\delta^*}\vert_T, w\vert_{E_T}, L)$$ is a tree with a unique
integer solution; because the pairwise LP relaxation is tight on trees
\citep{wainwrightjordan}, this implies by Property
\ref{prop:strictcompslack} that there is a pairwise dual solution to
this restricted instance that is locally decodable. Put together,
these two results explain the persistency of the pairwise LP
relaxation on the \emph{full} instance by applying different structure
at the sub-instance level.

\section{Experimental Details}
\label{appx:exp}

In this appendix, we provide more details and additional discussion
regarding the algorithms and experiments in Sections \ref{sec:checking}
and \ref{sec:experiments}.

\subsection{Explaining Algorithm \ref{alg:blockstab}}
\label{appx:alg}
We briefly give more details on the steps of Algorithm
\ref{alg:blockstab}. One key point is that we can efficiently compute
block dual solutions with very little extra computation per outer
iteration of the algorithm. We effectively only need to solve a dual
problem once; we can then easily generate block dual solutions for any
block decomposition for all subsequent iterations. In practice, we
simply find a pairwise dual solution $\eta^*$ using the MPLP algorithm
\citep{mplp}, then use Proposition \ref{lem:pair2block} to convert it
to a solution of the generalized block dual \eqref{eq:blockdual_gen}
for a given decomposition.

Additionally, we can avoid the expensive component of the inner loop
of the algorithm (solving \texttt{CheckStable} for each block $b$). To
parallelize \texttt{CheckStable} without any additional work, we
modify the node costs of each block using the solution $\delta^*$ to
the generalized block dual, then remove all the edges in
$E_{\partial}$. We can then solve the ILP \eqref{eq:mostvio_ILP} used
in \texttt{CheckStable} with one ``objective constraint'' for each
block. The objective function of \eqref{eq:mostvio_ILP} decomposes
across blocks once $E_{\partial}$ is removed. This approach avoids the
overhead of explicitly forming and solving the ILP
\eqref{eq:mostvio_ILP} for each block, which is especially helpful as
the number of blocks grows large. These optimizations are summarized
in Algorithm \ref{alg:blockstab_eff}.

\begin{algorithm}[t]
  \caption{\texttt{BlockStable}$(g, \beta, \gamma)$ (optimized)}
  \label{alg:blockstab_eff}
  Given $g$, create blocks $(S^1_1, \ldots, S^1_{k}, S^1_*)$ with
  \eqref{eq:blockdecomp}.

  Initialize $K^1 = |L|$.
  
  Find a solution $\eta^*$ to \eqref{eq:pairwisedual}.
  
  \For{$t \in \{1,\ldots, M\}$} {

    Initialize $S^{t+1}_* = \emptyset$.

    Compute $\delta^*$ for $(S^t_1, \ldots S^t_{K^t}, S^t_*)$ using $\eta^*$ and Proposition \ref{lem:pair2block}.

    Form $\mathcal{I} = ((V, E\setminus E_{\partial}), \theta^{\delta^*},
    w\vert_{E\setminus E_{\partial}}, L)$ using $\delta^*$ and
    \eqref{eq:reparam}.
    
    Set $(f_1,\ldots f_{K^t}, f_*) =$ \texttt{CheckStable}$(g, \beta, \gamma)$ run on instance
    $\mathcal{I}$.
    
    \For{$b \in \{1,\ldots, K^t, *\}$} {
      Compute $V_{\Delta} = \{u \in S^t_b | f_b(u) \ne g(u)\}$.

      Set $S^{t+1}_b = S^t_b \setminus V_{\Delta}$

      Set $S^{t+1}_* = S^{t+1}_* \cup V_{\Delta}$.

      \If{$b = *$} {
        Set $R = S^t_* \setminus V_{\Delta}$.

        Let $(S^{t+1}_{K^t+1}, \ldots, S^{t+1}_{K^t+p+1})$ = \texttt{BFS}$(R)$ be the $p$ connected components in $R$ that get the same label from $g$.

        Set $K^{t+1} = K^t + p$.
      }
    }
  }
\end{algorithm}

\subsection{Object Segmentation}

\subsubsection*{Setup: Markov Random Field}
We use the formulation of \citet{shotton2006textonboost,
  alahari2010dynamic}. The graph $G$ is a grid with one vertex for
each pixel in the original image; the edges connect adjacent
pixels. In this model, the node costs $\theta$ are set based on the
location of the pixel in the image, the color values at that pixel,
and the local shape and texture of the image. The edge weights
$w(u,v)$ are set as:
\[
w(u,v) = \eta_1\exp\left(-\frac{||I(u) - I(v)||_2^2}{2\sum_{p,q}||I(p) - I(q)||_2^2}\right) + \eta_2.
\]
Here $\eta = (\eta_1, \eta_2)$, $\eta \ge 0$ are learned parameters, and $I(u)$ is the vector of RGB values for pixel $u$ in the image. \citet{shotton2006textonboost} learn the node and edge parameters using a boosting method.\footnote{We use
  pre-built object segmentation models from the OpenGM Benchmark that
  are based on the models of \citep{alahari2010dynamic}:
  \url{http://hciweb2.iwr.uni-heidelberg.de/opengm/index.php?l0=benchmark}}
This setup yields an instance of a Potts model (\uml), so we can
proceed with our algorithms.  Many vertices of the object segmentation
instances appear to belong to large stable blocks. Unlike with stereo
vision, we were able to use the full instances in our experiments,
which, as we observed in Section \ref{sec:experiments}, could
contribute to the quality of our results for segmentation. Each
instance has 68,160 nodes and either five or eight labels. The LP is
persistent on 100\% of the nodes for all three instances.

\subsection{Stereo Vision}
\subsubsection*{Setup: Markov Random Field}
To begin, we let the graph $G$ be a grid graph where each node
corresponds to a pixel in $L$. We then need to set the costs
$\theta_u(i)$ for each $u$, $i$, and the weights $w(u,v)$ for each
edge $(u,v)$ in the grid. This is where the domain knowledge enters
the problem. For a pixel $u$, we set its cost $\theta_u(i)$ for
disparity $i$ as:
\begin{equation}
  \label{eq:intensdiff}
  \theta_u(i) = \left(I_L(u) - I_R(u - i)\right)^2.
\end{equation}
Here $I_L$ and $I_R$ are the pixel intensity functions for the images
$L$ and $R$, respectively, and the notation $u - i$ shifts a pixel
location $u$ by $i$ pixels to the left. That is, if $u$ corresponds to
location $(h,w)$, $u - i$ corresponds to location $(h, w-i)$. If the
difference \eqref{eq:intensdiff} is high, then it is unlikely that
pixel $u$ actually moved $i$ pixels between the two images. On the
other hand, if this difference is low, disparity $i$ is a plausible
choice for pixel $u$.  In our experiments, we use a small correction
to \eqref{eq:intensdiff} that accounts for image sampling
\citep{birchfield1998pixel}; this correction is also used by
\citet{BoykovExpansion} and \citet{tappen2003comparison}.

We can set the weights using a similar intuition. If $u$ and $v$ are
neighboring pixels and $I_L(u)$ is similar to $I_L(v)$, then $u$ and
$v$ probably belong to the same object, so they should probably get
the same disparity label. In this case, the weight between them should
be high. On the other hand, if $I_L(u)$ is very different from
$I_L(v)$, $u$ and $v$ may not belong to the same object, so they
should have a low weight---they may move different amounts between the
two images. To this end, we set
\begin{equation*}
  w(u,v) = \begin{cases}
    P \times s & |I_L(u) - I_L(v)| < T\\
    s & \text{otherwise}.
  \end{cases}
\end{equation*}
In our experiments, we follow \citet{tappen2003comparison} and set
$s=50$, $P = 2$, $T = 4$. This setup gives us a Potts model instance
$(G, \theta, w, L)$. 

\end{document}